\documentclass{article}

\usepackage{microtype}
\usepackage{graphicx}
\usepackage{subcaption}
\usepackage{booktabs} % for professional tables

\usepackage{tikz}
\usepackage{xcolor}
\usetikzlibrary{positioning,calc}

\definecolor{edenlow}{RGB}{35,70,150}
\definecolor{edenmid}{RGB}{205,205,205}
\definecolor{edenhigh}{RGB}{255,214,35}

\usepackage{pgfplots}
\pgfplotsset{compat=1.18}
\usetikzlibrary{positioning, arrows.meta}

\usepackage{flushend}

\usepackage{hyperref}

\usepackage[preprint]{icml2026}
\usepackage{mathtools}
\usepackage{graphicx}
\usepackage{algorithm, algorithmic}
\usepackage{booktabs}
\usepackage{caption}
\usepackage{subcaption}
\usepackage{amsmath,amssymb,amsthm}
\usepackage{comment}
\usepackage{booktabs}
\usepackage{multirow}
\usepackage{wrapfig}

\usepackage{colortbl}
\usepackage[table,xcdraw,dvipsnames]{xcolor}
\usepackage{booktabs}       % for professional tables
\usepackage{siunitx}        % for number alignment

\usepackage{amsmath,amsfonts,bm}

\def\eqref#1{equation~\ref{#1}}
\def\1{\bm{1}}

\DeclareMathAlphabet{\mathsfit}{\encodingdefault}{\sfdefault}{m}{sl}
\SetMathAlphabet{\mathsfit}{bold}{\encodingdefault}{\sfdefault}{bx}{n}

\newcommand{\E}{\mathbb{E}}

\usepackage[capitalize,noabbrev]{cleveref}

\theoremstyle{plain}
\newtheorem{theorem}{Theorem}[section]
\newtheorem{proposition}[theorem]{Proposition}
\newtheorem{lemma}[theorem]{Lemma}
\newtheorem{corollary}[theorem]{Corollary}
\theoremstyle{definition}

\theoremstyle{remark}

\usepackage[textsize=tiny]{todonotes}

\icmltitlerunning{Entropy-informed Decoding: Adaptive Information-Driven Branching}

\begin{document}

\twocolumn[
  \icmltitle{Entropy-informed Decoding: Adaptive Information-Driven Branching}

  % It is OKAY to include author information, even for blind submissions: the
  % style file will automatically remove it for you unless you've provided
  % the [accepted] option to the icml2026 package.

  % List of affiliations: The first argument should be a (short) identifier you
  % will use later to specify author affiliations Academic affiliations
  % should list Department, University, City, Region, Country Industry
  % affiliations should list Company, City, Region, Country

  % You can specify symbols, otherwise they are numbered in order. Ideally, you
  % should not use this facility. Affiliations will be numbered in order of
  % appearance and this is the preferred way.
  \icmlsetsymbol{equal}{*}

  \begin{icmlauthorlist}
    \icmlauthor{Benjamin Patrick Evans}{jpL}
    \icmlauthor{Sumitra Ganesh}{jpN}
    \icmlauthor{Leo Ardon}{jpL}
  \end{icmlauthorlist}

  \icmlaffiliation{jpL}{JP Morgan AI Research, London, UK}
  \icmlaffiliation{jpN}{JP Morgan AI Research, New York, USA}

  \icmlcorrespondingauthor{Benjamin Patrick Evans}{benjamin.x.evans@jpmorgan.com}
  \icmlkeywords{Decoding, Information theory, Adaptive thinking, Entropy, Beam Search, Adaptive Branching, Reasoning}
  \vskip 0.3in
]

% this must go after the closing bracket ] following \twocolumn[ ...

% This command actually creates the footnote in the first column listing the
% affiliations and the copyright notice. The command takes one argument, which
% is text to display at the start of the footnote. The \icmlEqualContribution
% command is standard text for equal contribution. Remove it (just {}) if you
% do not need this facility.

% Use ONE of the following lines. DO NOT remove the command.
% If you have no special notice, KEEP empty braces:
\printAffiliationsAndNotice{}  % no special notice (required even if empty)
% Or, if applicable, use the standard equal contribution text:
% \printAffiliationsAndNotice{\icmlEqualContribution}

\begin{abstract}
Large language models (LLMs) achieve remarkable generative performance, yet their output quality is dependent on the decoding strategy. While sampling-based methods (e.g., top-$k$, nucleus) and search-and-select based methods (e.g., beam search, best-of-$n$, majority voting) can improve upon greedy decoding, both approaches suffer from limitations: sampling generally commits to a single path, while search often expends excessive computation regardless of task complexity. To address these, we introduce \textbf{Entropy-informed DEcodiNg} (EDEN), a plug-and-play, model-agnostic decoding framework that adaptively allocates computation based on the model’s own uncertainty, approximating higher-width beam search with \emph{fewer expansions}. At each generation step, EDEN estimates the entropy of the output token distribution and adjusts the branching factor monotonically with the entropy, expanding more candidates in high-entropy regions and following a greedier path in low-entropy regions, improving token efficiency. %For closed source models, we provide theoretical bounds on the number of samples needed to estimate entropy within a target error, ensuring robust branching under limited API access.
Experiments across complex tasks, including mathematical reasoning, code generation, and scientific questions, demonstrate that EDEN consistently improves output quality over existing decoding strategies, achieving better accuracy–expansion trade-offs than fixed-width beam search. 
By treating next-token selection as a noisy maximisation problem, we prove that
branching factors monotone in entropy are guaranteed to find better
(i.e. more probable) continuations than any fixed branching factor within the same total computation budget, and derive explicit regret rates characterising the benefit of adaptive allocation.
\end{abstract}

\begin{figure}[!htb]
    \centering
    \includegraphics[width=\linewidth]{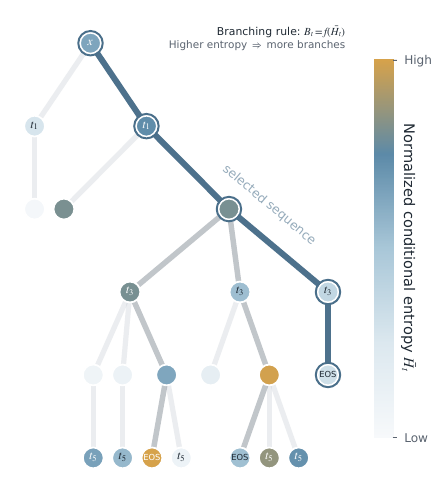}
    \caption{
   \textbf{Entropy-informed decoding (EDEN) adaptively allocates search compute.}
    At each decoding step, EDEN estimates the normalized entropy of the next-token
    distribution and converts it into a branching factor.
    Low-entropy states are decoded nearly greedily, while high-entropy states trigger
    additional exploration.
    This concentrates computation on uncertain reasoning forks while preserving a
    high-scoring completed sequence.
    }

    \label{figMethod}
\end{figure}

\section{Introduction}
Large language models (LLMs) have demonstrated impressive generative capabilities, but the quality of generated outputs depends on the decoding strategy. The simplest approach, greedy decoding, which selects the highest-probability token at each step, is well known to be suboptimal. Greedy decoding can lead to repetitive or incoherent text by making locally optimal choices that sacrifice better global continuations \citep{Holtzman2020The}. Even a single low-probability token choice may remove promising continuations, effectively “trapping” the model in erroneous reasoning paths. This issue is especially important in complex, multi-step reasoning tasks. To address the limitations of greedy decoding, various alternative decoding techniques have been proposed. Sampling-based methods such as \textit{top-$k$} \citep{noarov2025foundations}, \textit{nucleus (top-$p$)} \citep{Holtzman2020The}, and \textit{min-$p$} sampling \citep{minP} introduce randomness to increase output diversity. For instance, top-$k$ sampling restricts token choices to the highest-probability $k$ tokens, often producing more natural text than greedy sampling. Nucleus sampling truncates the distribution to a cumulative probability mass $p$, sampling only from the most likely tokens. Search-and-select based methods like \textit{beam search}, best-of-$n$, and majority voting, explicitly explore multiple outputs, ultimately choosing the best of the resulting candidates. However, existing approaches suffer from important limitations. Sampling methods typically use fixed hyperparameters ($k$ or $p$) and greedily sample from this reduced set of tokens, potentially overlooking valid lower-probability continuations. Search-and-select based methods, by contrast, often require substantial compute regardless of task complexity, failing to allocate resources \emph{adaptively}. For example, a simple query may not require a full search, yet beam search/best-of-$n$/majority voting still generate many expansions irrespective of this complexity. In essence, sampling methods may explore too narrowly, while search-based decoders lack information-driven computation allocation during their expansions.

In this work, we propose 
\textbf{EDEN} (Entropy-informed DEcodiNg), an \textit{entropy-aware decoding} strategy that dynamically allocates computational effort based on the model’s own uncertainty, approximating higher width beam search with fewer expansions. At each generation step, we estimate the entropy of the output token distribution to determine areas to focus compute.  This is motivated by recent evidence that entropy captures meaningful token-level uncertainty, calibration error, and reasoning forks in LLM generation \cite{cao2026on,wang2026beyond}. High-entropy steps trigger higher branching factors with multiple node expansions, while low-entropy steps follow greedier paths, concentrating compute on the most ambiguous points. Crucially, EDEN leverages the model’s own logits without requiring extra training, heuristics, or external rewards. EDEN is fully plug-and-play and compatible with existing base decoding strategies, including sampling-based methods, and works seamlessly even with closed source models where we provide theoretical bounds on the number of samples required, and experiments under limited API access.

%To further improve efficiency and guarantee sound pruning, we integrate a principled \textit{entropy-guided branch-and-bound} framework. By estimating admissible upper and lower bounds on the normalised sequence scores, we prune candidate sequences that cannot outperform the current best lower bound, ensuring no promising candidates are discarded. This combination enables dynamic beam sizing driven by uncertainty, effectively narrowing the search space and avoiding the combinatorial explosion common in long or large-vocabulary generation tasks.
% \paragraph{
Our main contributions are:
% }
\begin{itemize}
\item \textbf{Adaptive branching.} We propose a novel search-based approach that adapts the computation allocation (in the form of branching factor) based on the estimation of entropy of the model outputs, efficiently finding more likely continuations without exhaustive search.
%dynamically allocate computation based on entropy, and integrate admissible bounds to prune unpromising sequences on the fly. This ensures efficient search while preserving optimality guarantees within the considered search space.

\item \textbf{Theoretical grounding and regret rates.}
While branching decisions in existing approaches are relatively ad hoc, we prove that
branching factors monotone in entropy improve upon fixed branching under mild
assumptions. We further derive an explicit regret bound of the form
\[
\mathbb{E}[R_T]
\le
G P_{\max}
\sum_{t=1}^{T}
\exp\!\left(-c m_t \Delta_{\min}^{2}\right),
\]
showing how entropy-adaptive compute allocation reduces cumulative decision error relative to uniform (fixed branching) allocation.

\item \textbf{Sampling guarantees for entropy estimation.} For closed source models, we provide theoretical bounds on the number of samples (sublinear) required to estimate the entropy within a given error tolerance, enabling robust branching decisions even when working with limited API access or approximate distributions.
%\item \textbf{Plug-and-play, model-agnostic design.} Our method requires no additional training or in-depth hyperparameter tuning, integrating seamlessly with existing decoding strategies and architectures.
\end{itemize}

In practice, EDEN changes the role of search from a fixed-cost decoding procedure into an adaptive computation mechanism. Easy tokens are decoded nearly greedily, while ambiguous reasoning forks receive additional exploration. This gives EDEN a favorable accuracy–compute trade-off, especially in multi-step reasoning settings where uncertainty is unevenly distributed across the generation.

\section{Related Work}
Several recent works have incorporated entropy and related information-theoretic measures to guide decoding and control behavior in language models. \citet{simonds2025entropy} proposes an entropy-aware model-switching strategy, dynamically selecting between large and small LMs based on output entropy to reduce inference cost. \citet{li2025entropy,li2025entropygatedbranchingefficienttesttime} use entropy thresholding to binarily trigger fixed branching (i.e., to branch or not), improving coverage in mathematical reasoning. Entropix \citep{Xjdr-Alt} incorporates both entropy and the variance of the entropy to adapt decoding strategies, e.g., by inserting chain-of-thought or pause tokens at high-uncertainty points. \citet{rahn2024controlling} introduce entropy-based activation steering to control LLM agents. \citet{zhang2025entropy} propose entropy-based exploration depth, where entropy guides how deeply to search during multi-step reasoning.  \citet{potraghloo2026toph} propose Top-$H$ decoding, an entropy-bounded sampling method that adaptively truncates the token distribution before sampling. HARP \citep{harp} introduces a hesitation-aware reframed forward pass that uses token-level entropy to detect uncertainty and selectively apply extra computation during inference, demonstrating how entropy-guided adaptive computation can enhance transformer performance without retraining. \citet{hanentropy} leverage entropy to determine self-referential insertion statements in long-form text generation. Each of these works begins to show the usefulness of considering the model's entropy; however, none of the existing works consider what we propose: using entropy to dynamically adjust the width of the search branching factor, providing a more principled approach to adapting beam search widths \citep{lin}. 

Unlike prior entropy-based approaches that rely primarily on thresholding or ad hoc heuristics, we introduce a provably justified monotone allocation rule that directly ties the branching factor to entropy. Moreover, we provide formal sample-complexity guarantees for entropy estimation, ensuring compatibility even with closed-API models where only limited sampling and/or limited model logits are available.

\section{Method}

EDEN, presented in \cref{algPseudoCode} (and visualised in \cref{figMethod}), is guided by the intuition that the \textit{shape} of the output distribution provides valuable information about the model's uncertainty. Prior work has observed: \textit{“Factual sentences are likely to contain tokens with higher likelihood and lower entropy, while hallucinations are likely to come from positions with flat probability distributions with high uncertainty.”} \citep{manakul-etal-2023-selfcheckgpt}. Motivated by this, we use entropy as a signal to guide adaptive exploration: rather than focusing on a fixed number of expansions at each decoding step, we base the number of branches on the (estimated) entropy of the output distribution. This adaptive branching allows for dynamic allocation of computational effort, focusing more search on uncertain regions, allowing context-aware branching. Intuitively, the proposed approach can be seen as \textit{approximating} a much higher width beam search while requiring significantly fewer expansions by adapting the branching factor based on the output's entropy.

\begin{algorithm}[!htb]
\caption{Entropy-Informed Decoding}
\label{algPseudoCode}
\begin{algorithmic}[1]
\STATE \textbf{Input:} Input $x$, model $M$, max tokens $T$, vocab size $\mathcal{V}$, max beam size $B_{\max}$, length penalty exponent $\alpha$
\STATE \textbf{Output:} Highest scoring decoded sequence $\mathbf{y}_{1:t}$

\STATE Run greedy decoding to obtain initial lower bound $S^* \gets \mathrm{Score}_{\alpha}(y^\mathrm{greedy})$
\STATE Initialize search tree with root node $\mathcal{R}$ representing empty sequence $\mathbf{y}_{1:0}$
\STATE $\mathcal{B} \gets \{ \mathcal{R} \}$  \COMMENT{Active beam candidates}

\WHILE{$\mathcal{B}$ not empty}
    \STATE $\mathcal{A} \gets$ active nodes in $\mathcal{B}$ not ending in EOS and with $t < T$
    \IF{$\mathcal{A} = \emptyset$} \STATE \textbf{break} \ENDIF
    \STATE Compute next-token distributions: $P(y_{t+1} \mid x, \mathbf{y}_{1:t})$ for all contexts $\mathbf{y}_{1:t} \in \mathcal{A}$
    \STATE Estimate entropy $\hat{H_t}(\mathbf{y}_{1:t})$ for all $\mathbf{y}_{1:t} \in \mathcal{A}$
    \STATE Compute adaptive branching factors: $B_t = \max\big(1, \lfloor B_{\max} \cdot \bar{H}_t(\mathbf{y}_{1:t}) \rfloor \big)$ for all $\mathbf{y}_{1:t} \in \mathcal{A}$

    \FOR{each $\mathbf{y}_{1:t} \in \mathcal{A}$}
        \STATE Select top-$B_t$ tokens from $P(y_{t+1} \mid x, \mathbf{y}_{1:t} )$: $\{ y_{t+1}^{(j)} \}_{j=1}^{B_t}$
        \FOR{$j = 1$ to $B_t$}
            \STATE Form candidate: $\mathbf{y}_{1:t+1} = \mathbf{y}_{1:t} \, \| \, y_{t+1}^{(j)}$
            \STATE Compute cumulative log-prob: $s(\mathbf{y}_{1:t+1}) = s(\mathbf{y}_{1:t}) + \log P(y_{t+1}^{(j)} \mid x, \mathbf{y}_{1:t})$
            \STATE Estimate future  score (\cref{secPruning})
            \IF{$y_{t+1}^{(j)} = \mathrm{EOS}$}
                \STATE $\overline{S}(\mathbf{y}_{1:t+1}) = \underline{S}(\mathbf{y}_{1:t+1}) = \frac{s(\mathbf{y}_{1:t+1})}{(t+1)^\alpha}$
            \ELSE
                \STATE $\overline{S}(\mathbf{y}_{1:t+1}) = \frac{s(\mathbf{y}_{1:t+1}) + (T - t - 1) \cdot \log(1)}{T^\alpha}$
                \STATE $\underline{S}(\mathbf{y}_{1:t+1}) = \frac{s(\mathbf{y}_{1:t+1}) + (T - t - 1) \cdot \log \left(\frac{1}{\mathcal{V}}\right)}{T^\alpha}$
            \ENDIF
            \STATE  Add promising candidates \emph{only}
            \IF{$\overline{S}(\mathbf{y}_{1:t+1}) \geq S^*$}
                \STATE Add $\mathbf{y}_{1:t+1}$ to candidate pool $\mathcal{B}$
                \STATE Update $S^* \gets \max(S^*, \underline{S}(\mathbf{y}_{1:t+1}))$
            \ELSE
                \STATE \textbf{break} \COMMENT{Remaining completions are lower ranked}
            \ENDIF
        \ENDFOR
    \ENDFOR

    \STATE Retain top-$B_{\max}$ scoring candidates to form next $\mathcal{B}$
\ENDWHILE
\STATE \textbf{return} Best sequence found with highest normalized score
\end{algorithmic}
\end{algorithm}

\subsection{Entropy-guided branching}

\textbf{Notation} Let $y$ be an output from the vocabulary $\mathcal{V}$. Let \( \mathbf{y}_{1:t} \) denote a partial sequence of length \( t \), and \( T \) be the maximum allowed sequence length. Let the model’s predicted next-token distribution at step \( t \) be \( P(y_{t+1} \mid x, \mathbf{y}_{1:t}) \), where $x$ is some context, $\mathbf{y}_{1:t}$ the sequence so far. For shorthand notational convenience, $\boldsymbol{p}=(p_1,\dots,p_V)$ denote the sorted probabilities so that $p_1\ge p_2\ge\cdots$. The (Shannon) entropy of this output is denoted as $H_t(y_{t+1} \mid x, \mathbf{y}_{1:t}) = -\sum_i p_i\log p_i$, and the perplexity $\mathrm{PP}_t=\exp(H_t)$ (note that when the conditions are dropped, it is always assumed to be conditional on the context as described above). A notation table is presented in \cref{appendixNotation}.

Here, we assume we are trying to maximise a score, where we use the cumulative log-probability of the sequence: $
s(\mathbf{y}_{1:t}) = \sum_{i=1}^{t} \log P(y_i \mid x, \mathbf{y}_{1:i-1})$ (and discuss alternative scorers in \cref{secScorer}). To fairly compare sequences of different lengths, we apply a length penalty 
$\mathrm{Score}_{\alpha}(\mathbf{y}_{1:t}) = \frac{s(\mathbf{y}_{1:t})}{t^{\alpha}}
$, where \(\alpha\) controls the bias toward longer or shorter sequences \citep{johnson2017google}. %Setting \(\alpha=1\) normalizes by length, while tuning \(\alpha\) allows the algorithm to emphasize brevity or verbosity.

\subsubsection{Why entropy}\label{secEntropy}

We view the next-token selection at step $t$ as a noisy maximization problem.  
Let
$$
V_t(i) = \log P(i \mid x_t, \cdot) + \mathrm{OPT}_{t+1}(i),
\qquad$$
$$
i^* = \arg\max_i V_t(i),
\qquad
\Delta_t(i) = V_t(i^*) - V_t(i) \ge 0.
$$
where OPT is the (unknown) actual optimal value of the future score. As OPT is unknown, let $\hat{V}_t(i)$ be an estimator of $V_t(i)$ based on $m_t$ units of compute (e.g., rollouts/branches), where $\hat{V}_t(i) - V_t(i) $ has sub-Gaussian noise with variance proxy $\sigma_t^2 = \delta^2/m_t$ (\cref{appendixSamples}),
a relatively standard estimation assumption; see, e.g., 
\citet{vershynin2018high} for general properties of sub-Gaussian estimators, and \citet{lattimore2020bandit} for the widespread use of sub-Gaussian noise models in score and value estimation.
Standard concentration plus a union bound gives the probability of making an estimation error as:
\begin{equation*}
\begin{split}
\Pr(\text{mistake at }t) = \Pr\!\left( \hat{V}_t(i^*) < \max_{i \neq i^*} \hat{V}_t(i) \right) \\
\;\lesssim\; \sum_{i \neq i^*} \exp\!\big( -c\, m_t\, \Delta_t(i)^2 \big).
\end{split}
\end{equation*}
where $c > 0$ is some generic positive constant, which depends only on this variance proxy. Hence, $m_t$ should grow with the \emph{statistical hardness} of step $t$. What we argue here is that \textbf{entropy provides a natural, computable proxy for this hardness}. At proof time, we will treat the scalar $m_t$ as the effective sample budget consumed at time $t$. The essential property we use in the proofs is the monotonicity of error with $m_t$, and show that $m_t$ should scale with $H_t$.

To begin, we demonstrate that entropy tells us the number of plausible strong next token candidates (Lemma \ref{lem:entropy_support}). Intuitively, higher entropy means more candidates worth branching on.

\begin{lemma}[Entropy controls the number of plausible winners]\label{lem:entropy_support}
For output distribution $p$, the most probable output satisfies $p_1\ge e^{-H}$. Moreover, for any $\varepsilon\in(0,1)$, the $\varepsilon$-typical set $S_{\varepsilon}=\{i: p_i\ge \mathrm{PP}^{-1/\varepsilon}\}$ has mass at least $1-\varepsilon$ and cardinality at least $(1-\varepsilon)\mathrm{PP}^{1/\varepsilon}$.
\end{lemma}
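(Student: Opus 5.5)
The plan has three parts: the bound on $p_1$, the mass of $S_\varepsilon$, and its cardinality. The first two are short arguments based on the surprisal $X=-\log p_I$, where $I\sim p$. The cardinality part is the main obstacle: as worded, the lower bound is false for essentially every distribution. I explain why below and say what I would prove in its place.

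\textbf{Bound on $p_1$.} I will use the fact that $H$ is an average of surprisals. Since every $p_i\le p_1$,
\[
H=\sum_i p_i\log\frac{1}{p_i}\;\ge\;\sum_i p_i\log\frac{1}{p_1}=\log\frac{1}{p_1}.
\]
Exponentiating gives $p_1\ge e^{-H}=\mathrm{PP}^{-1}$.

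\textbf{Mass of $S_\varepsilon$.} The surprisal $X=-\log p_I$ is nonnegative with $\mathbb{E}[X]=H$. The complement of $S_\varepsilon$ is exactly the event $\{X>H/\varepsilon\}$, because $p_i<\mathrm{PP}^{-1/\varepsilon}$ if and only if $-\log p_i>H/\varepsilon$. Markov's inequality gives
\[
\Pr(X>H/\varepsilon)\le\frac{\mathbb{E}[X]}{H/\varepsilon}=\varepsilon,
\]
so $p(S_\varepsilon)\ge 1-\varepsilon$. I would treat the degenerate case $H=0$ separately: then $p$ is a point mass, $S_\varepsilon=\{1\}$, and the claim holds trivially.

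\textbf{Cardinality: the intended route.} The natural template combines the mass bound with a per-element probability bound:
\[
1-\varepsilon\;\le\;\sum_{i\in S_\varepsilon}p_i\;\le\;|S_\varepsilon|\max_{i\in S_\varepsilon}p_i .
\]
This would give the stated inequality $|S_\varepsilon|\ge(1-\varepsilon)\mathrm{PP}^{1/\varepsilon}$ only if every $p_i$ in $S_\varepsilon$ satisfied $p_i\le\mathrm{PP}^{-1/\varepsilon}$. But the definition of $S_\varepsilon$ gives the reverse inequality, $p_i\ge\mathrm{PP}^{-1/\varepsilon}$.

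\textbf{Why the stated lower bound fails.} Summing that reverse inequality over $S_\varepsilon$ gives
\[
1\;\ge\;\sum_{i\in S_\varepsilon}p_i\;\ge\;|S_\varepsilon|\,\mathrm{PP}^{-1/\varepsilon},
\]
that is, the \emph{upper} bound $|S_\varepsilon|\le\mathrm{PP}^{1/\varepsilon}$. The stated lower bound fails already for the uniform distribution on $V\ge 2$ tokens. There $S_\varepsilon$ is the whole vocabulary, so $|S_\varepsilon|=V$, whereas $(1-\varepsilon)V^{1/\varepsilon}>V$ for small $\varepsilon$. It also fails for a skewed two-token distribution: there $|S_\varepsilon|\le 2$ while $\mathrm{PP}^{1/\varepsilon}\to\infty$.

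\textbf{What I would prove instead.} In my write-up I would prove the first two claims as above. For the cardinality, I would replace the stated bound by the two results that do hold:
\begin{itemize}
\item the upper bound $|S_\varepsilon|\le\mathrm{PP}^{1/\varepsilon}$;
\item the correct lower bound $|S_\varepsilon|\ge(1-\varepsilon)/p_1$, which follows from the displayed chain above using $\max_{i\in S_\varepsilon}p_i\le p_1$.
\end{itemize}
I would note explicitly that no bound of the form $(1-\varepsilon)\mathrm{PP}^{1/\varepsilon}$ can hold. The upper bound is the statement the lemma's message actually needs: high entropy permits many plausible winners, and low entropy forces few.
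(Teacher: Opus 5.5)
Your analysis is correct, and on the cardinality claim it is sounder than the paper's own proof. For the mass of $S_\varepsilon$ you use the paper's own argument: Markov's inequality applied to the surprisal $X=-\log p_I$, with $\mathbb{E}[X]=H$. Your separate treatment of $H=0$ is a detail the paper skips. The paper's proof never justifies $p_1\ge e^{-H}$, although the statement asserts it and Lemma~\ref{lem:gap_entropy} relies on it. Your averaging argument $H\ge\sum_i p_i\log(1/p_1)=\log(1/p_1)$ fills that hole.

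For the cardinality, the paper concludes: ``since each $i\in S_\varepsilon$ contributes at least $\mathrm{PP}^{-1/\varepsilon}$ mass, we must have $|S_\varepsilon|\cdot\mathrm{PP}^{-1/\varepsilon}\ge 1-\varepsilon$.'' That is exactly the reversed step you flagged. A per-element lower bound on probability yields only $|S_\varepsilon|\,\mathrm{PP}^{-1/\varepsilon}\le p(S_\varepsilon)\le 1$, which is an upper bound on $|S_\varepsilon|$.

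Your counterexamples are valid. More generally, every finitely supported $p$ with $H>0$ violates the stated bound once $\varepsilon$ is small: $|S_\varepsilon|$ is at most the support size, while $(1-\varepsilon)\mathrm{PP}^{1/\varepsilon}\to\infty$. So the claim is false, not merely unproved.

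The statement appears modeled on the AEP typical-set lower bound. That bound requires a two-sided notion of typicality, with an upper bound on each $p_i$ as well. Markov on a nonnegative surprisal controls only one tail, and no lower bound on $|S_\varepsilon|$ in terms of $\mathrm{PP}$ alone is possible. For example, put mass $0.99$ on one token and spread $0.01$ uniformly over $N$ tokens. Then $\mathrm{PP}\to\infty$ as $N\to\infty$, while $|S_{1/2}|=1$ for every $N$.

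Your replacements $|S_\varepsilon|\le\mathrm{PP}^{1/\varepsilon}$ and $|S_\varepsilon|\ge(1-\varepsilon)/p_1$ are both correct. The upper bound is the direction the later union-bound arguments need. Note, however, that it gives $\mathrm{PP}^{1/\varepsilon}$, not the $\mathrm{PP}_t$ prefactor used in Proposition~\ref{propBudgetEntropy} and in the sketch of Theorem~\ref{theoremRegretEntropy}.
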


\begin{proof}
For a random index $I\sim p$, define the surprise $X=-\log p_I$. Then $\mathbb{E}[X]=H=\log\mathrm{PP}$. By Markov,
$\Pr(X>H/\varepsilon)\le \varepsilon$, so $\Pr(p_I<\mathrm{PP}^{-1/\varepsilon})\le \varepsilon$.
Thus, the set $S_\varepsilon$ carries probability mass at least $1-\varepsilon$.
Since each $i\in S_\varepsilon$ contributes at least $\mathrm{PP}^{-1/\varepsilon}$ mass,
we must have $|S_\varepsilon|\cdot \mathrm{PP}^{-1/\varepsilon}\ge 1-\varepsilon$, giving the claimed bound.
\end{proof}

Thus the \emph{effective number of near-optimal candidates} is well captured by $\mathrm{PP}_t$: low $H_t$ implies $\mathrm{PP}_t \approx 1$; high $H_t$ implies $\mathrm{PP}_t \gg 1$.

Intuitively, entropy controls how many next tokens are plausible enough to deserve exploration.

\begin{lemma}[Log-gap between top two probabilities]\label{lem:gap_entropy}
Let $\gamma=\log p_1-\log p_2$. Then
\[
\gamma \ge \log\!\Big(\frac{p_1}{1-p_1}\Big) \ge \log\!\Big(\frac{e^{-H}}{1-e^{-H}}\Big).
\]
\end{lemma}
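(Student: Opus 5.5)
The plan is to chain two elementary inequalities: an upper bound on $p_2$ in terms of $p_1$, followed by the lower bound on $p_1$ already established in Lemma~\ref{lem:entropy_support}. Throughout we assume $p_2>0$, so that $\gamma$ is finite. If $p_2=0$ then $p_1=1$ and $H=0$, so $\gamma=+\infty$ and both right-hand sides are $+\infty$ under the convention $\log(1/0)=+\infty$. In that degenerate case the statement holds trivially.

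\textbf{First inequality.} Since the probabilities sum to one and are sorted, $p_2 \le \sum_{i\ge 2} p_i = 1-p_1$. Taking logarithms and using that $\log$ is increasing gives $-\log p_2 \ge -\log(1-p_1)$. Adding $\log p_1$ to both sides yields
\[
\gamma = \log p_1 - \log p_2 \ge \log p_1 - \log(1-p_1) = \log\!\Big(\frac{p_1}{1-p_1}\Big).
\]
Here $1-p_1>0$ holds because $p_2>0$.

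\textbf{Second inequality.} The map $x\mapsto \log\big(x/(1-x)\big)$, the logit, is strictly increasing on $(0,1)$, since its derivative is $1/(x(1-x))>0$. Lemma~\ref{lem:entropy_support} gives $p_1 \ge e^{-H}$. Moreover $e^{-H}\in(0,1)$ whenever $H>0$, which holds in the non-degenerate case. Applying the logit to both sides of $p_1 \ge e^{-H}$ therefore preserves the inequality and gives
\[
\log\!\Big(\frac{p_1}{1-p_1}\Big) \ge \log\!\Big(\frac{e^{-H}}{1-e^{-H}}\Big).
\]

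There is no real obstacle here. The only care needed is at the boundary cases: $p_1=1$ (equivalently $H=0$), and ensuring $1-p_1>0$ and $1-e^{-H}>0$ so that all logarithms are defined. These are exactly the cases handled by the convention above.

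It is also worth noting that the bound is only informative (positive) when $e^{-H}>1/2$, i.e.\ $H<\log 2$. This matches the intended reading: low entropy forces a large gap between the top two candidates, and hence a near-greedy step.
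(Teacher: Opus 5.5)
Your proof is correct and follows the same route as the paper: bound $p_2 \le 1-p_1$ to get the first inequality, then use $p_1 \ge e^{-H}$ from Lemma~\ref{lem:entropy_support} for the second. You also spell out that the logit is increasing on $(0,1)$ and handle the degenerate case $p_2=0$ (equivalently $H=0$), details the paper leaves implicit.
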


\begin{proof}
Since $p_2\le 1-p_1$, $\log p_2 \le \log(1-p_1)$. Thus $\gamma \ge \log(p_1/(1-p_1))$. By the previous lemma, $p_1\ge e^{-H}$, yielding the final inequality.
\end{proof}

Consequently, as $H_t \downarrow 0$, $\gamma_t \to \infty$ (easy step); as $H_t \uparrow \log V$, $\gamma_t \to 0$ (hard step). This result complements \cref{lem:gap_entropy}: together, they show that entropy governs both the breadth of plausible candidates and the sharpness of separation between them, justifying its role as a proxy for step difficulty. Thus, entropy is well positioned for determining how many samples we should take, as we show in \cref{propBudgetEntropy}.

To connect entropy to effective score gaps, we assume a mild distributional Lipschitz continuity condition: small perturbations in the next-token distribution induce only bounded changes in continuation values (motivated in \cref{appendixLipshitz}.). This assumption is analogous to the well-known simulation lemma in reinforcement learning \citep{kearns2002near, szepesvari2022algorithms} and is widely used in planning and sample-complexity analyses.

\begin{proposition}[Required budget increases with entropy]
\label{propBudgetEntropy}

Under the assumption of distributional Lipschitz continuity: that the continuation value changes smoothly with respect to perturbations in the next-token distribution, we have:
\begin{multline}
\big| \mathrm{OPT}_{t+1}(i)-\mathrm{OPT}_{t+1}(j) \big|
\le \\ \Lambda \cdot d\left(P(\cdot\mid x_t,i, \cdot),\,P(\cdot\mid x_t,j, \cdot)\right)
\end{multline}
for some metric $d$ (e.g., total variation), where $\Lambda$ is the Lipschitz constant and can be taken proportional to the remaining horizon $T$ under bounded per-step scores. Define the effective gap
\[
\Delta_t^{\mathrm{eff}} \;=\; \min_{i \neq i^*}
\Big[ \log p_{1} - \log P(i \mid x_t) - \Lambda \,d_{t+1}(i,i^*) \Big].
\]
Then $\Delta_t^{\mathrm{eff}}$ decreases monotonically with $H_t$. To guarantee
$\Pr(\text{mistake at }t) \le \delta_t$, it suffices to take
\[
m_t \;\gtrsim\; \frac{1}{\big(\Delta_t^{\mathrm{eff}}\big)^2}
\;\log\!\frac{\mathrm{PP}_t}{\delta_t}.
\]
\end{proposition}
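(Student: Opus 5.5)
We first reduce the true value gaps to the effective gap, then argue that the effective gap is monotone in entropy, and finally run the concentration-plus-union-bound from \cref{secEntropy} with a count of competitors controlled by $\mathrm{PP}_t$.

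\textbf{Step 1: the effective gap is a lower bound on the true gap.} Write $\Delta_t(i)=V_t(i^*)-V_t(i)$. With $\log p_1 = \log P(i^*\mid x_t)$, this splits as
\[
\Delta_t(i)=\big[\log p_1-\log P(i\mid x_t)\big]+\big[\mathrm{OPT}_{t+1}(i^*)-\mathrm{OPT}_{t+1}(i)\big].
\]
The Lipschitz assumption bounds the second bracket below by $-\Lambda d_{t+1}(i,i^*)$. Hence $\Delta_t(i)\ge \Delta_t^{\mathrm{eff}}$ for every $i\neq i^*$. Here I will assume, as the statement implicitly does, that $i^*$ is the top-probability token, or else read $\log p_1$ as $\log P(i^*\mid x_t)$.

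\textbf{Step 2: monotonicity in $H_t$.} The term $\log p_1-\log P(i\mid x_t)$ is at least $\gamma_t$. By \cref{lem:gap_entropy}, $\gamma_t\ge \log\big(e^{-H_t}/(1-e^{-H_t})\big)$, and this bound is strictly decreasing in $H_t$. The Lipschitz penalty does not depend on $H_t$, since $d\le 1$ for total variation and so the penalty is at most $\Lambda$.

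This shows that the guaranteed lower bound on $\Delta_t^{\mathrm{eff}}$ decreases in $H_t$; the quantity itself need not. I would state the monotonicity claim in exactly that form, as monotonicity of the worst-case effective gap over distributions with entropy $H_t$. This is the main obstacle: for a specific distribution, entropy does not determine the gap pointwise. Two distributions with equal entropy can have very different $p_1/p_2$. So the honest version is a statement about the envelope $\inf\{\Delta^{\mathrm{eff}} : H(p)=H_t\}$. To show that this infimum is nonincreasing, I would argue that the set of distributions with entropy at most $H$ grows with $H$.

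\textbf{Step 3: sample budget.} By sub-Gaussianity with variance proxy $\delta^2/m_t$ and a union bound,
\[
\Pr(\text{mistake at }t)\le \sum_{i\ne i^*}\exp\!\big(-c\,m_t\,\Delta_t(i)^2\big)\le N_t\exp\!\big(-c\,m_t(\Delta_t^{\mathrm{eff}})^2\big),
\]
where $N_t$ is the number of competitors that matter. I would restrict to the typical set $S_\varepsilon$ of \cref{lem:entropy_support} and charge the atypical mass $\varepsilon$ to $\delta_t$, say $\varepsilon=\delta_t/2$. This gives an effective count polynomial in $\mathrm{PP}_t$, and its logarithm is $O(\log \mathrm{PP}_t)$ up to constants absorbed in $\gtrsim$. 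Setting the right-hand side to at most $\delta_t/2$ and solving yields
\[
m_t\ge \frac{1}{c(\Delta_t^{\mathrm{eff}})^2}\log\frac{2N_t}{\delta_t},
\]
which is the claimed $m_t\gtrsim (\Delta_t^{\mathrm{eff}})^{-2}\log(\mathrm{PP}_t/\delta_t)$.

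There is a secondary difficulty here. \cref{lem:entropy_support} gives only a lower bound on $|S_\varepsilon|$, while the union bound needs an upper bound. I would use $|S_\varepsilon|\le \mathrm{PP}_t^{1/\varepsilon}$, which follows because each element of $S_\varepsilon$ has mass at least $\mathrm{PP}_t^{-1/\varepsilon}$ and the total mass is at most $1$. Then $\log|S_\varepsilon|\le \varepsilon^{-1}\log\mathrm{PP}_t$, and the $\varepsilon$-dependent factor is absorbed into the constant.
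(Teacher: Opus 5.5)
Your Steps 1 and 2 follow the paper's route: the Lipschitz bound gives $\Delta_t(i)\ge\Delta_t^{\mathrm{eff}}$, and \cref{lem:gap_entropy} then supplies the entropy dependence. Your caveat is correct. Only the lower bound $\log\frac{e^{-H_t}}{1-e^{-H_t}}-\Lambda$ is monotone, not $\Delta_t^{\mathrm{eff}}$ pointwise, and the paper's argument delivers no more than that.

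Two small repairs are needed there. First, an envelope that is monotone by nestedness must be an infimum over $\{H(p)\le H_t\}$, not over the level set $\{H(p)=H_t\}$. Second, you must assume $\Delta_t^{\mathrm{eff}}>0$ explicitly in order to square $\Delta_t(i)\ge\Delta_t^{\mathrm{eff}}$. Your lower bound is nonpositive unless $H_t<\log(1+e^{-\Lambda})$, which is a tiny window when $\Lambda\propto T$.

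The genuine gap is in Step 3, where you reduce the competitor count to $|S_\varepsilon|$. Charging the atypical mass $\varepsilon$ to $\delta_t$ is not a valid move. The mistake event $\{\hat V_t(i^*)<\max_{i\neq i^*}\hat V_t(i)\}$ is driven by the estimation noise in $\hat V_t$, not by drawing a token from $p$. So $\sum_{i\notin S_\varepsilon}p_i\le\varepsilon$ says nothing about the probability that some atypical token's estimate overtakes $\hat V_t(i^*)$. That probability is controlled only by $\sum_{i\notin S_\varepsilon}\exp(-c\,m_t\Delta_t(i)^2)$. This is a sum over up to $|\mathcal V|-|S_\varepsilon|$ terms and is unrelated to their $p$-mass.

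Even granting the reduction, your choice $\varepsilon=\delta_t/2$ makes $1/\varepsilon=2/\delta_t$ non-constant, so it cannot be absorbed into the constant. The bound $\log|S_\varepsilon|\le(2/\delta_t)\log\mathrm{PP}_t$ gives $m_t\gtrsim(\Delta_t^{\mathrm{eff}})^{-2}\big[\delta_t^{-1}\log\mathrm{PP}_t+\log(1/\delta_t)\big]$. This exceeds the claimed $(\Delta_t^{\mathrm{eff}})^{-2}\log(\mathrm{PP}_t/\delta_t)$ by a factor that is unbounded as $\delta_t\to0$.

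The paper never attempts this derivation. Its proof (\cref{appendixSamples}) is just the union bound over $s$ candidates, each with gap at least $\Delta^{\mathrm{eff}}$, giving $m\ge\frac{1}{c(\Delta^{\mathrm{eff}})^2}\log\frac{s}{\delta}$. The paper then identifies $s$ with the effective candidate count $\mathrm{PP}_t$ as a modeling assumption. The regret appendix makes the same identification, where $\mathrm{PP}_t$ multiplies the exponential and is assumed $\le P_{\max}$.

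If you want more than an assumption, define the mistake relative to the candidates actually compared, namely $S_\varepsilon$ for a \emph{fixed} $\varepsilon$ such as $1/2$. Under your convention that $i^*$ is the top token, $i^*\in S_\varepsilon$ automatically, because $p_1\ge e^{-H_t}\ge\mathrm{PP}_t^{-1/\varepsilon}$. Then nothing needs to be charged to $\delta_t$, and $\log|S_\varepsilon|\le\varepsilon^{-1}\log\mathrm{PP}_t$ is genuinely absorbed into the constant.
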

(see \cref{appendixSamples}). Since $\mathrm{PP}_t$ increases and $\Delta_t^{\mathrm{eff}}$ decreases with $H_t$, the required budget $m_t$ is an increasing function of $H_t$, motivating a branching factor that is monotone with entropy.

\subsubsection{Branching}

To convert entropy into a numeric branching factor to guide the decoding, we use a basic piecewise-linear monotonic scaling function \( f(H, B_{\max}) \) (based on \cref{propBudgetEntropy}):
\[
f(H, B_{\max}) = \max\left(1, \left\lfloor B_{\max} \cdot \bar{H} \right\rfloor\right),
\]
where \( B_{\max} \) is a hyperparameter specifying the maximum branching factor allowed (i.e., maximum beam width), and $0 \leq \bar{H} \leq 1$ is the normalized entropy $\bar{H}=\frac{H}{\log \mathcal{V}}$. This function $f$ governs how many outputs to explore from the next-token distribution at each step, justified by \cref{theoremRegretEntropy}.

\begin{theorem}[Entropy-adaptive branching reduces regret up to constants]
\label{theoremRegretEntropy}
Under the assumptions in \cref{secEntropy}, there exist constants $a,b>0$ \footnote{In the later sections, for simplicity, we have assumed $a=1, b=0$, but such parameters could be tuned for the task at hand.} such that choosing
$$
B_t \;=\; \max\!\left\{ 1,\ \big\lfloor a \cdot \phi(H_t) + b \big\rfloor \right\}
$$
with monotone
$$
\phi \text{ } \biggl( \text{e.g., } \phi(H)=\frac{H}{\log \mathcal{V}} \biggr),
$$
achieves expected cumulative regret $R_t$
\[
\mathbb{E}[R_T]
\;=\;
\sum_{t \le T}\ \sum_{i \neq i^*} \Delta_t(i)\,\Pr(\text{choose }i\text{ at }t)
\]
strictly smaller than that of any fixed-branch policy when the output distributions' entropy varies across decoding steps (\cref{appendixAdaptive}). 
\end{theorem}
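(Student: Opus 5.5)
The argument is a resource-allocation comparison under a matched total budget. First, bound the per-step error using the concentration estimate from \cref{secEntropy}. With $m_t$ units of compute at step $t$ (identified with the branching factor, $m_t \propto B_t$), the expected per-step regret is at most
\[
r_t(m_t) \;=\; \sum_{i\neq i^*}\Delta_t(i)\exp\!\big(-c\,m_t\,\Delta_t(i)^2\big).
\]
Let $G$ bound the gaps and let $P_{\max}$ bound the number of competitors carrying non-negligible mass. By \cref{lem:entropy_support} the latter is of order $\mathrm{PP}_t$. This gives
\[
r_t(m_t)\le G\,P_{\max}\exp\!\big(-c\,m_t\,(\Delta_t^{\mathrm{eff}})^2\big).
\]
Summing over $t$ yields the regret form quoted in the introduction. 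The key structural point is that each $r_t$ is \emph{decreasing and convex} in $m_t$, being a positive combination of decaying exponentials.

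Second, set up the comparison. A fixed-branch policy uses $m_t\equiv \bar m$, for a total budget $M=T\bar m$. Consider the problem of minimising $\sum_t r_t(m_t)$ subject to $\sum_t m_t = M$. By convexity, the KKT conditions say the optimum equalises marginal returns: $-r_t'(m_t)=\lambda$ for every $t$. Using the surrogate $r_t(m)\approx A_t e^{-c m \Delta_t^2}$ with $\Delta_t=\Delta_t^{\mathrm{eff}}$, this gives
\[
m_t^\star=\frac{1}{c\Delta_t^2}\log\frac{cA_t\Delta_t^2}{\lambda}.
\]
This matches the form of \cref{propBudgetEntropy}. By that proposition, $A_t\sim\mathrm{PP}_t$ is increasing in $H_t$ and $\Delta_t^{\mathrm{eff}}$ is decreasing in $H_t$, so $m_t^\star$ is increasing in $H_t$ on the relevant range. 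The uniform allocation satisfies the KKT conditions only if all $r_t$ coincide at $\bar m$, which fails when entropies vary. By \emph{strict} convexity the uniform allocation is then strictly suboptimal.

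Third, show that the simple family $B_t=\max\{1,\lfloor a\phi(H_t)+b\rfloor\}$ strictly beats uniform at the same budget. It is not enough to say the optimum is monotone; the specific rule must be shown to improve. I would use an exchange argument. Take two steps $s,t$ with $H_s<H_t$, and move a unit $\eta$ of budget from $s$ to $t$. The change in regret is $-r_t'(\bar m)\eta - \bigl(-r_s'(\bar m)\bigr)\eta + O(\eta^2)$ with a sign favouring the move, since $|r_t'(\bar m)|>|r_s'(\bar m)|$ whenever the harder step has the larger marginal return at $\bar m$. Choosing $a$ small and $b\approx\bar m$ makes the entropy-monotone rule a small perturbation of uniform in exactly this direction, with the total budget preserved by centering $b$ against the mean of $a\phi(H_t)$. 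A first-order expansion then gives strict improvement. This is why the statement only asserts existence of $a,b$.

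The main obstacle is the claim $|r_t'(\bar m)|>|r_s'(\bar m)|$ for $H_t>H_s$. Differentiating the surrogate gives $c\Delta^2 A e^{-c\bar m\Delta^2}$, and a smaller $\Delta$ both shrinks the prefactor and slows the decay. So this inequality holds only in the regime $\bar m\,\Delta^2\gtrsim$ const, or when the growth of $A_t\sim\mathrm{PP}_t$ dominates. I would handle this by stating it as the operating regime implicit in ``the assumptions in \cref{secEntropy}'': a budget large enough that high-entropy steps remain the bottleneck. The other technical wrinkle is the floor function, which makes budgets integer. For that, I would take $\bar m$ large enough that a single unit transfer is available, or argue with the randomised-rounding expectation instead.
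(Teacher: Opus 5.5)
Your argument is correct in substance, for the continuous relaxation and under the regime hypothesis you flag. It finishes by a different route from the paper's.

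Up to the KKT step the two agree. \cref{appendixAdaptive} also minimises $\sum_t A_t e^{-\kappa_t m_t}$ subject to $\sum_t m_t=M$, obtains $m_t^\star=\kappa_t^{-1}\log(A_t\kappa_t/\lambda)$, and uses strict convexity to rule out uniform allocation. The paper then asserts that the entropy-adaptive allocation \emph{is} $m^\star$ (\cref{thm:EA_dominates_fixed}) and covers other monotone $\phi$ by a ``bounded distortion'' corollary. You instead show that the centred rule $\bar m+a(\phi_t-\bar\phi)$, with $\phi_t=\phi(H_t)$, is a descent direction at the uniform point.

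The paper's route would give global optimality, but its identification fails as written: $m_t^\star=\kappa_t^{-1}\log(A_t\kappa_t)-\kappa_t^{-1}\log\lambda$ is not proportional to $\kappa_t^{-1}\log(A_t\kappa_t)$. Your route gives only a local improvement, but it holds for every monotone $\phi$ and explains why the statement asserts only the existence of $a,b$.

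Your regime caveat is a genuine restriction on the statement, not an artefact of your method. Write $A_t=A(H_t)$ and $\kappa_t=\kappa(H_t)$. Then $dm^\star/dH$ has the sign of $A'/A+(\kappa'/\kappa)(1-\kappa m^\star)$, so the paper's claim that $m^\star$ increases with entropy silently needs the same condition. Conversely, suppose $g_t=-r_t'(\bar m)$ decreases in $H_t$. Then convexity gives $\sum_t r_t(\bar m+u_t)\ge\sum_t r_t(\bar m)-\sum_t g_t u_t\ge\sum_t r_t(\bar m)$ for every perturbation with $\sum_t u_t=0$ that increases in $H_t$. In that regime the statement fails. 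So state the regime condition as an added hypothesis; it does not follow from the assumptions of \cref{secEntropy}.

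Smaller points:
\begin{itemize}
\item Uniform allocation violates KKT when the marginal returns $r_t'(\bar m)$ differ, not when the values $r_t(\bar m)$ differ.
\item Your pairwise exchanges aggregate via Chebyshev's sum inequality. The first-order change is $-\tfrac{a}{2T}\sum_{s,t}(g_s-g_t)(\phi_s-\phi_t)<0$, which requires $\phi$ to be non-constant on the realised entropies.
\item The small-$a$ argument does not survive the floor. Once $a(\max_t\phi_t-\min_t\phi_t)<1$, the rule takes at most two adjacent integer values, and a non-constant allocation of that kind cannot total exactly $T\bar m$. The integer version therefore needs your unit-transfer construction with larger $a$. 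It also needs the discrete inequality $r_s(\bar m-1)-r_s(\bar m)<r_t(\bar m)-r_t(\bar m+1)$, and matched numbers of donor and recipient steps, which ties in $H_t$ can prevent.
\item Like the paper, you reduce an upper bound on $\mathbb{E}[R_T]$ rather than the regret itself.
\item \cref{lem:entropy_support} only lower-bounds the size of the typical set. So $A_t\asymp\mathrm{PP}_t$ is a modelling assumption, not a consequence of that lemma.
\end{itemize}
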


This variation in entropy is typically observed in LLM generation processes \cite{wang2026beyond,cao2026on}, further supporting the theorem's relevance to practical decoding.

\begin{proof}[Sketch]
Combine Lemmas~\ref{lem:entropy_support}--\ref{lem:gap_entropy} to express both the candidate count ($\mathrm{PP}_t$) and the effective gap ($\Delta_t^{\mathrm{eff}}$) as monotone functions of $H_t$. Plug these into
\[
\Pr(\text{mistake at }t)
\;\lesssim\;
\mathrm{PP}_t \cdot \exp\!\big(-c\,m_t\,(\Delta_t^{\mathrm{eff}})^2\big).
\]
Choosing $m_t \propto \phi(H_t)$ equalizes the exponent across steps up to constants: small $H_t$ needs little compute; large $H_t$ gets more. This lowers cumulative regret relative to any fixed $m_t$. 
\end{proof}

\paragraph{Explicit regret rate.}
Under a standard uniform-gap condition, the regret guarantee can be made explicit.
Suppose the score estimates are sub-Gaussian, the effective gap is bounded below by
$\Delta_{\min} > 0$, the instantaneous regret is bounded by $G$, and the effective
candidate set size is bounded by $P_{\max}$. Then the expected cumulative regret
satisfies
\[
\mathbb{E}[R_T]
\le
G P_{\max}
\sum_{t=1}^{T}
\exp\!\left(-c m_t \Delta_{\min}^{2}\right),
\]
for a constant $c > 0$. Thus, with a constant per-step budget $m$, regret scales as
\[
O\!\left(T \exp\!\left(-c m \Delta_{\min}^{2}\right)\right),
\]
whereas choosing
\[
m_t = \Omega\!\left(\Delta_{\min}^{-2}\log T\right)
\]
is sufficient to obtain sublinear or bounded regret depending on the constant.
Full details are provided in \cref{appendixRegret}.

\begin{figure}[!htb]
    \centering
    \includegraphics[width=\linewidth]{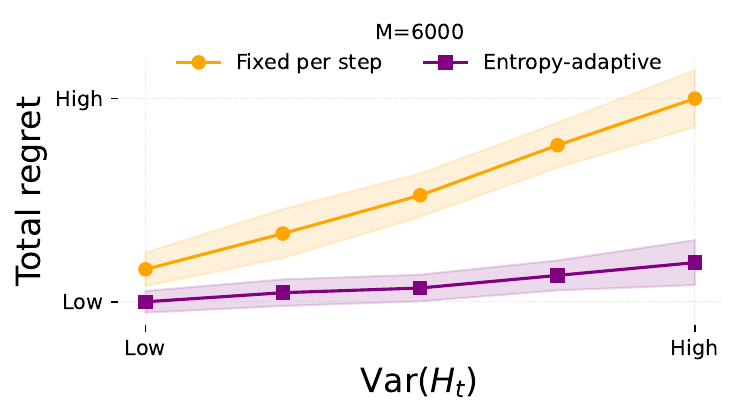}
    \caption{Fixed versus adaptive sampling. In the fixed case, every step is allocated the same sampling budget $m_t=\frac{M}{T}$. In the adaptive case, $m_t$ is allocated proportional to entropy $m_t \propto H_t$. The x-axis shows increasing variance $\mathrm{Var}(\mathbf{H}_t)$ of the output distributions. According to \cref{thm:EA_dominates_fixed}, whenever $\mathrm{Var}(\mathbf{H}_t) > 0$ entropy adaptive allocation dominates fixed allocation (with $M >> 0$), explaining the improved performance as variability increases.}
    \label{figMonteCarloRegret}
\end{figure}

We demonstrate an example of this selection process and corresponding reduction in regret with Monte Carlo simulations in \cref{figMonteCarloRegret}.

This adaptive branching has the following essential characteristics: If the model is confident, branching is minimal (often just greedy). If the model is uncertain, the algorithm explores multiple alternative continuations. This method is plug-and-play: EDEN can be substituted in place of existing decoding strategies, such as greedy, or integrated on top of existing approaches, such as top-$k$ or nucleus, by modifying the support of the entropy calculation (to consider only the reduced set of tokens rather than the full vocabulary). Additionally, EDEN does not require any model retraining or specialized reward models (although these can be used, \cref{appendixRewardModel}), depending only on the model's own outputs.

\section{Experiments}\label{secResults}
We evaluate EDEN on a suite of standard LLM tasks and compare it against widely used decoding baselines. Our evaluation focuses on demonstrating the quality and efficiency of the generations, as well as sensitivity testing key parts of the algorithm.

\subsection{Configuration}

\paragraph{Model} We use Llama-3.2-3B-Instruct \citep{touvron2023llama} (with default hyperparameters of temperature$=0.6$), and provide the same breakdowns in the \cref{appendixResults} for additional model families, including Gemma3 \citep{team2024gemma}, IBM Granite \citep{granite2024granite}, and base results on Mistral \citep{jiang2023mistral7b}. This gives a range of model sizes (from 1B-7B) and model families to ensure performance improvements are robust and model-agnostic.

\paragraph{Datasets} We evaluate across a range of complex tasks, including Math, Programming, and Science. Specifically, GSM8K \citep{cobbe2021gsm8k} and MATH 500 \citep{lightman2023lets} for mathematical reasoning, HumanEval \citep{chen2021codex} for code generation, and SciBench \citep{wang2024scibench} for science questions. This gives a broad range of tasks, each with verifiable answers. For each task, we set the maximum generation length $T=400$, giving ample room for sufficient answers.

\paragraph{Comparisons} We compare EDEN against commonly utilised search and decoding strategies, including Greedy decoding, Top-$k$ Sampling, Top-$p$ (Nucleus) sampling, Top-$H$ sampling, Min-$p$ Sampling, Majority voting, best-of-$n$, Beam Search, and Diverse Beam Search. We choose these approaches as they have standardized official HuggingFace implementations. For the hyperparameters, we use the ones as specified in the corresponding models' configuration, for Llama (top-p=$0.9$), Gemma (top-k=64, top-p=0.95). When not specified by the model, we use commonly suggested values of top-k=10, top-p=0.9, min-p=0.1, and beam width=3. We set $B_{\mathrm{max}}=5$ to ensure similar total allocation budgets as beam width=3, and provide sensitivity results across different beam widths and $B_{\mathrm{max}}$'s in the experiments. For majority voting and best-of-$n$, we use $n=B_{\mathrm{max}}$ to allow for a similar number of token generations among the approaches.

\paragraph{Evaluation Metric.} We evaluate the accuracy of the resulting model against the ground truth, in a zero-shot pass@1 fashion. For the programming dataset (HumanEval), we evaluate against all of the test cases, only considering the code correct if all of the test cases pass. 

We use the HuggingFace \citep{wolf2019huggingface} for all models and datasets. In the following sections, we analyze the accuracy, token efficiency, dynamic allocation, and sensitivity results.

\subsection{Results}

\textbf{EDEN achieves the best average rank across the benchmarks while using substantially fewer expansions than fixed-width beam search.}

\textbf{Quality}
\cref{tblPerformance} summarizes accuracy across datasets and baselines, highlighting the overall generation quality of each method. EDEN achieves the best (lowest) rank overall (statistically significant overall difference, with Friedman p-value $0.012$), consistently outperforming greedy decoding, other test-time scaling (best-of-$n$, majority voting), and sampling-based strategies (consistent across model families, \cref{appendixResults}). On all four datasets, we see a significant improvement in performance from the proposed approach above greedy decoding, all sampling-based strategies, and best-of-$n$ and majority voting. Furthermore, when compared to beam search, we generally match or improve the performance with significantly fewer expansions required, as reported between parentheses in \cref{tblPerformance}, motivating the claim that EDEN approximates higher width beam search. In \cref{tabBigO}, we provide a Big O breakdown of the different time and space requirements for the various approaches.

A Bayesian hierarchical analysis shows that EDEN has a 75\% posterior probability of being the best method overall (rightmost column, \cref{tblPerformance}). At the pairwise level, EDEN has an extremely high probability ($\geq$96\%) of outperforming nearly all competing approaches, with the exception of beam search, against which EDEN is still favoured with a 77\% posterior probability (but again, requiring significantly fewer expansions). We further visualize these outcomes in \cref{figBayesian}, highlighting the same pattern: EDEN outperforms the alternative decoding strategies across the pairwise dominance matrix, has the highest probability of being the best, and exhibits the most favourable posterior skill distribution.

\begin{figure*}[!htb]
    \centering
    \includegraphics[width=\linewidth]{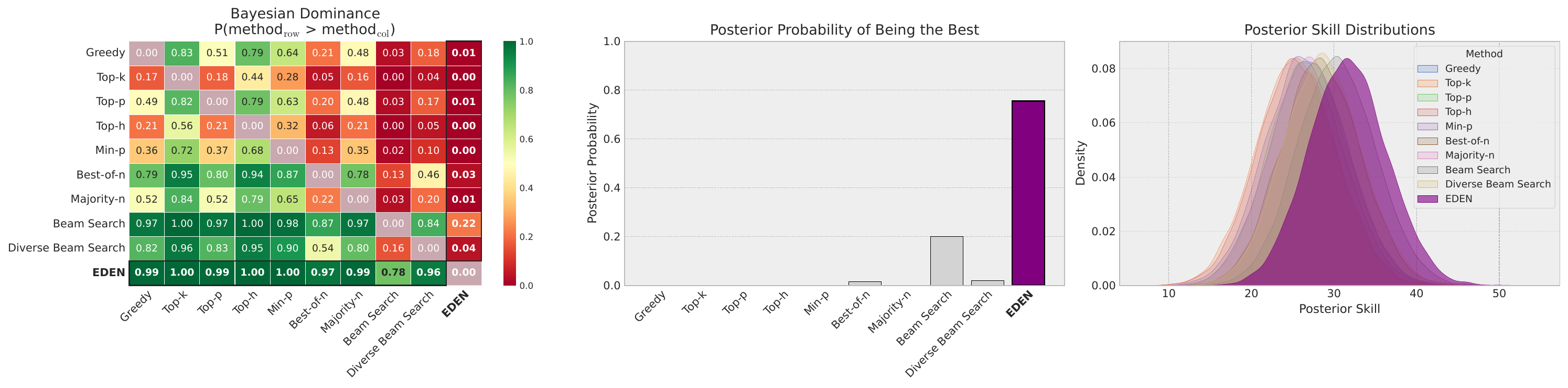}
    %\caption{Bayesian Hierarchical Model results. EDEN clearly outperforms the comparisons approaches with high probability, dominating the comparisons (left), having the highest posterior probability of being correct (middle) and the highest posterior skill distribution (right).}

    \caption{\textbf{EDEN has the strongest posterior performance across decoding methods.} Bayesian hierarchical model analysis of decoding methods. 
\textit{Left:} Pairwise dominance probabilities \(P(\text{method}_i > \text{method}_j)\), showing that EDEN outperforms all competing approaches with high posterior probability. 
\textit{Middle:} Posterior probability of each method being the best overall, with EDEN achieving the highest value by a substantial margin. 
\textit{Right:} Posterior skill distributions for all methods,  with  EDEN achieving the highest.}
    \label{figBayesian}
\end{figure*}

It is also important to note that the absolute accuracy is not what is important here, but rather the relative accuracy among the methods -- as we utilize a $3$B base model, larger models may achieve higher SOTA accuracy (discussed in limitations, \cref{appendixLimitations}), however, the key is that the proposed approach consistently improves these baseline models, and the reported base results are all in line with the results for models of similar sizes.

\begin{table*}[h]
    \centering
    \caption{Performance comparison presenting the accuracy $\pm$ 95\% bootstrapped half confidence interval. For the search-based approaches, the total number of branches explored is shown in brackets. The average Friedman rank is shown in the \textit{rank} column (statistically significant difference, $p=0.012$). The rightmost column shows the probability from a Bayesian Hierarchical Model that EDEN outperforms each competitor.}
    \label{tblPerformance}
    \resizebox{\textwidth}{!}{
    \begin{tabular}{@{}lcccc|c|c@{}}
        \toprule
        \textbf{Method} & \textbf{GSM8K} $\uparrow$ & \textbf{MATH500} $\uparrow$ &
        \textbf{HumanEval} $\uparrow$ & \textbf{SciBench} $\uparrow$ &
        \textit{Rank} $\downarrow$ & \begin{tabular}[c]{@{}c@{}}\textbf{P(EDEN}\\ \textbf{$>$ competitor)}\end{tabular}  \\
        \midrule
        Greedy                 & 73.5\% ± 2.4\% & 27.4\% ± 3.8\% & 27.0\% ± 7.1\% & 4.9\% ± 1.7\% & 6.12 &
           0.99 \\
        \midrule
        Top-$k$ Sampling       & 70.7\% ± 2.5\% & 23.0\% ± 3.7\% & 27.6\% ± 6.7\% & 4.9\% ± 1.6\% & 7.00 &
        1.00 \\
        Top-$p$ Sampling       & 73.5\% ± 2.4\% & 27.4\% ± 4.0\% & 27.0\% ± 6.7\% & 4.8\% ± 1.6\% & 6.62 &
            0.99 \\
        Top-$H$ Sampling       & 69.7\% ± 2.5\% & 26.0\% ± 3.8\%  &  27.0\% ± 6.7\%   & 4.5\% ± 1.5\% & 8.12 & 1.00 \\
        Min-$p$ Sampling       & 72.3\% ± 2.5\% & 28.0\% ± 3.8\% & 25.8\% ± 6.7\% & 4.3\% ± 1.5\% & 8.00 &
            1.00 \\
        \midrule
        Best-of-$n$ (5)        & 78.2\% ± 2.3\% & 28.2\% ± 3.8\% & 27.0\% ± 7.1\% & 5.2\% ± 1.7\% & 4.62 &
            0.96\\
        Majority-$n$ (5)       & 78.7\% ± 2.2\% & 30.8\% ± 4.1\% & 19.6\% ± 5.8\% & 4.1\% ± 1.4\% & 6.25 &
            0.99 \\
        \midrule
        Diverse Beam Search (3) & 77.5\% ± 2.2\% (1200) & 30.0\% ± 4.0\%	(1200) & 26.4\% ± 6.7\% (1200) & 5.4\% ± 1.7\% (1200) &
                   4.75              & 0.96
            \\
        Beam Search (3)        & \textbf{81.5\% ± 2.2\%} (1200) &
                                  29.2\% ± 4.0\% (1200) &
                                  28.8\% ± 7.1\% (1200) &
                                  6.9\% ± 1.8\% (1200) &
                                  2.25 &
            0.78 \\
        \midrule
        \textbf{EDEN} (Ours)   & 80.5\% ± 2.2\% (598) &
                                  \textbf{32.8\% ± 4.0\%} (840) &
                                  \textbf{31.3\% ± 7.4\%} (965) &
                                  \textbf{7.4\% ± 2.0\%} (1018) &
                                  \textbf{1.25} &
            P\textsubscript{best}=0.75 \\
        \bottomrule
    \end{tabular}
    }
\end{table*}

\begin{table}[!htb]
\centering
\caption{
Complexity and GPU Suitability of the methods. $T$ = generation length, $C$ = per-token forward cost, $b$ = beam width, $n$ = number of samples, and $B_{\max}$ = EDEN's maximum branching factor. $K$ denotes the per-token memory cost.
}
\label{tabBigO}
\resizebox{\linewidth}{!}{%
\begin{tabular}{lllll}
\toprule
\textbf{Method} &
\textbf{Time (Worst)} &
\textbf{Time (Avg.)} &
\textbf{Memory} &
\textbf{GPU Suit.} \\
\midrule

Greedy &
$O(TC)$ &
$O(TC)$ &
$O(TK)$ &
Excellent\\

Top-$k$/Top-$p$ &
$O(TC)$ &
$O(TC)$ &
$O(TK)$ &
Excellent \\

Best-of-$n$ &
$O(nTC)$ &
$O(nTC)$ &
$O(TK)$ $\dots$ $O(nTK)$&
High \\

Majority Vote &
$O(nTC)$ &
$O(nTC)$ &
$O(TK)$ $\dots$ $O(nTK)$&
High \\
 &
 &
 &
Sequential $\dots$ Parallel &
 \\

Beam Search ($b$) &
$O(bTC)$ &
$O(bTC)$ &
$O(bTK)$ &
Good \\

\textbf{EDEN} &
$O(B_{\max}TC)$ &
$O\!\left( (\sum_t^T B_t)C \right)$ &
$O(B_{\max}TK)$ &
Fair \\
\bottomrule
\end{tabular}%
}
\end{table}

\textbf{Token Efficiency: Approximation of higher beam widths}

\begin{figure}[!htb]
    \centering
\includegraphics[width=.95\linewidth]{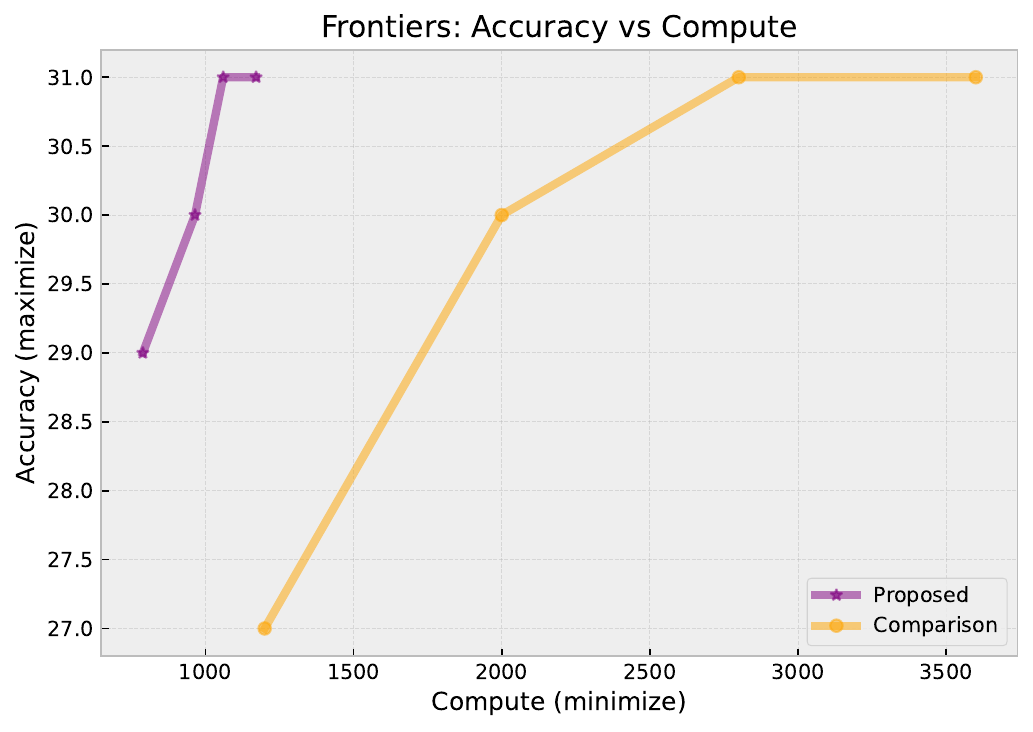}
    \caption{\textbf{EDEN shifts the accuracy–expansion frontier upward and leftward:} it achieves higher HumanEval accuracy with fewer expansions than fixed-width beam search.}
    \label{figPareto}
\end{figure}

To get a better understanding of how the generation quality scales with token utilisation, we vary the maximum branching budget $B_{\max} \in \{3,5,7,9\}$ (or beam size in the comparison) and evaluate the resulting accuracy versus total number of expansions generated. This illustrates how efficiently each decoding strategy converts sampling into high-quality generations. For this, we use HumanEval for comparison. The results are shown in \cref{figPareto}. Across the compute budgets, the proposed approach consistently achieves improved performance with fewer expansions than beam search, essentially approximating a higher branching factor search with far fewer model generations required, providing empirical support to the theoretical claims made in \cref{secEntropy}. EDEN reduces \emph{average-case} expansions compared to beam search while retaining the same worst-case asymptotic complexity (\cref{tabBigO}). However, we also note that due to the heterogeneous branching rates, this can make parallelization more difficult on a GPU. In terms of wall clock time, in \cref{appendixBigO}, we highlight the small overhead entropy calculation has in comparison to the forward passes saved.

\begin{figure}[!htb]
\includegraphics[width=\linewidth]{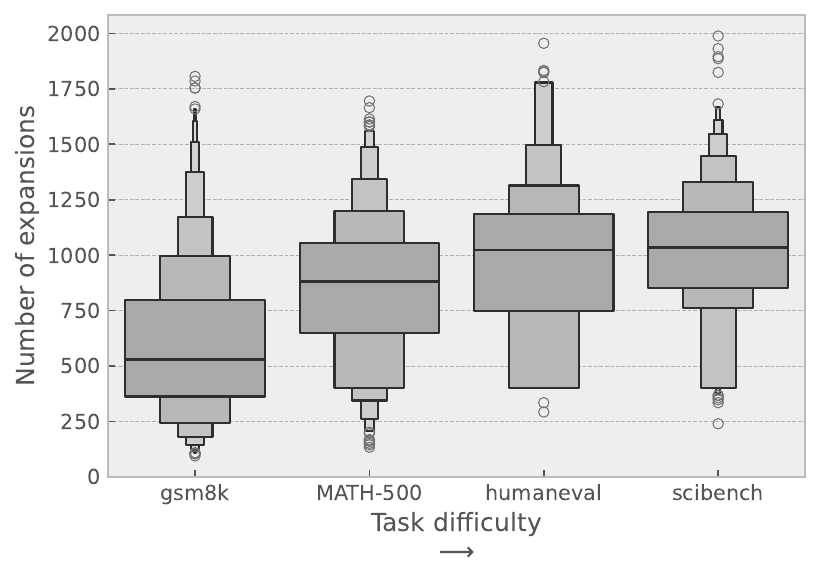}
    \caption{\textbf{EDEN automatically allocates more expansions to harder tasks}: without observing task labels, rewards, or ground-truth difficulty during decoding, EDEN dynamically expands more on more difficult (lower-accuracy) benchmarks.}
    \label{figDynamicComputation}
\end{figure}

\textbf{Dynamic allocation} A key benefit of the proposed approach is the automatic dynamic allocation of computation based on the task complexity (determined based on the resulting baseline \textit{accuracy} from \cref{tblPerformance}). We can observe this  dynamic allocation from the mean expansions in parentheses in \cref{tblPerformance}, and the distributions visualised in \cref{figDynamicComputation}. The simplest (i.e. highest accuracy) task (GSM8K) receives the lowest number of expansions, and with the increasing task difficulty, we see that the number of expansions increases automatically up to the most difficult task on SciBench. Again, it is worth emphasizing that during the decoding process, a reward model/accuracy/difficulty measure is \emph{not} used; these additional expansions are purely driven by the entropy of the outputs. Unlike most search methods, which expend significant compute regardless of task complexity, here, compute (in the form of expansions) is only allocated at decoding time as needed, helping to alleviate one of the main criticisms of search-based decoding, unnecessary expansions.

\subsection{Entropy estimation under closed-API access}

\begin{figure}[!htb]
\includegraphics[width=\linewidth]{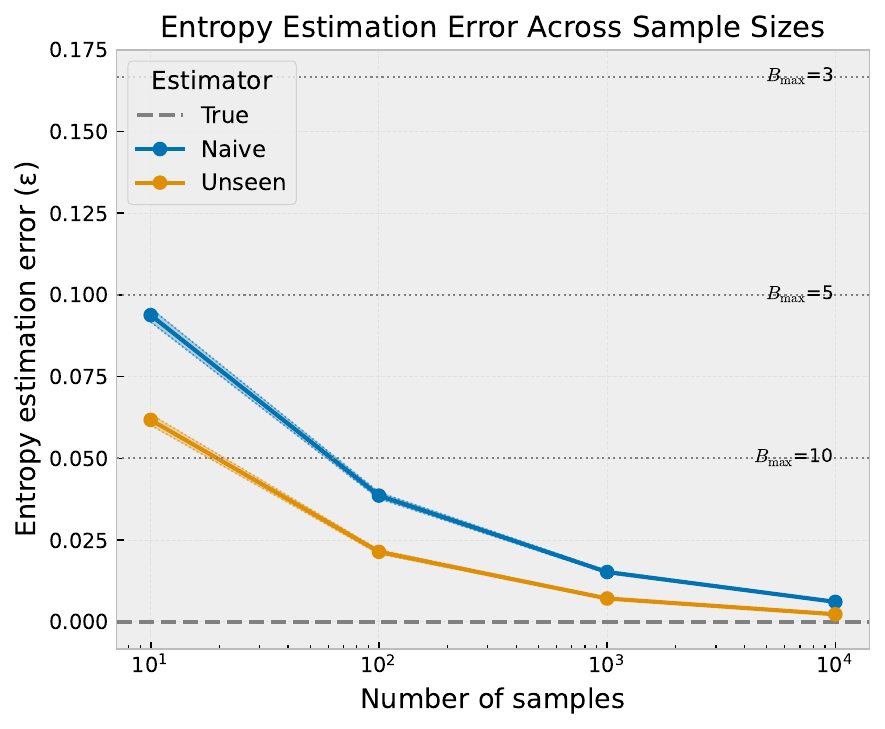}
    \caption{\textbf{Entropy estimation is accurate enough for EDEN branching with few samples}. RMSE of first-token entropy estimation across the datasets; dashed lines show permissible error thresholds for different  $B_{max}$}
    %RMSE of (first) token entropy estimation across the datasets (with vocabulary size $|\mathcal{V}| \approx 128$k) for various sample sizes, with permissible error rates for different branching factors 
    \label{figEntropyEstimation}
\end{figure}

We next analyze entropy estimation in closed source (black box) settings. For open-source models, entropy $H$ is computed directly from $P(\mathcal{V}|x)$. When we do not have access to the full output distribution (e.g., with closed source models or limited API access), we instead approximate $\hat{H}(\mathbf{y}_{1:t}) \approx \bar{H}(\mathbf{y}_{1:t})$. A central question is: \emph{How many samples are required to estimate the entropy of an unknown $P(\mathcal{V} |x)$ within some additive error $\epsilon$?} While a naive plugin estimator requires linear (in vocab size) $\Theta(|\mathcal{V}|/\epsilon)$ samples, optimal estimators achieve a sublinear rate $\Theta(|\mathcal{V}|/(\epsilon \log |\mathcal{V}|))$~\citep{NIPS2013_53c04118,wu2016minimax}. Since EDEN only uses entropy for branching decisions, it tolerates error up to $\epsilon < 0.5/B_{\max}$ as rounding $M \cdot H$ to the nearest integer yields the same result whenever  $|H - \hat{H}| < \frac{0.5}{B_{max}}$. With typical $B_{\max} \lesssim 10$, even small sample sizes provide sufficient estimation accuracy (see \cref{figEntropyEstimation}).

Furthermore, many decoding schemes (e.g., top-$k$) restrict the effective support for the output distribution, reducing sample complexity to $\Theta(k/(\epsilon \log k))$, where $k << |\mathcal{V}|$. Thus, when integrating EDEN with such strategies, entropy estimation becomes even more tractable. In fact, many closed source APIs, including OpenAI, expose the top-$k$ log probs (for $k \leq 20$), allowing exact entropy computation for that subset. In such cases, EDEN can be trivially integrated into closed source LLMs under top-$k$ decoding, by searching only over the top-$k$ logits. This truncated entropy over top-$k$ is a biased underestimate of full-vocabulary entropy because tail mass is omitted; however, EDEN only requires a stable monotone proxy for branching, preserving enough of the signal for effective allocation, as demonstrated below.

%\cref{figClosedSource} illustrates EDEN’s integration in this setting: compared to greedy and top-$k$ decoding, EDEN applied to top-$k$ logits yields improved accuracy while remaining compatible with black-box APIs.

\begin{figure}[!htb]
    \centering
\includegraphics[width=\linewidth]{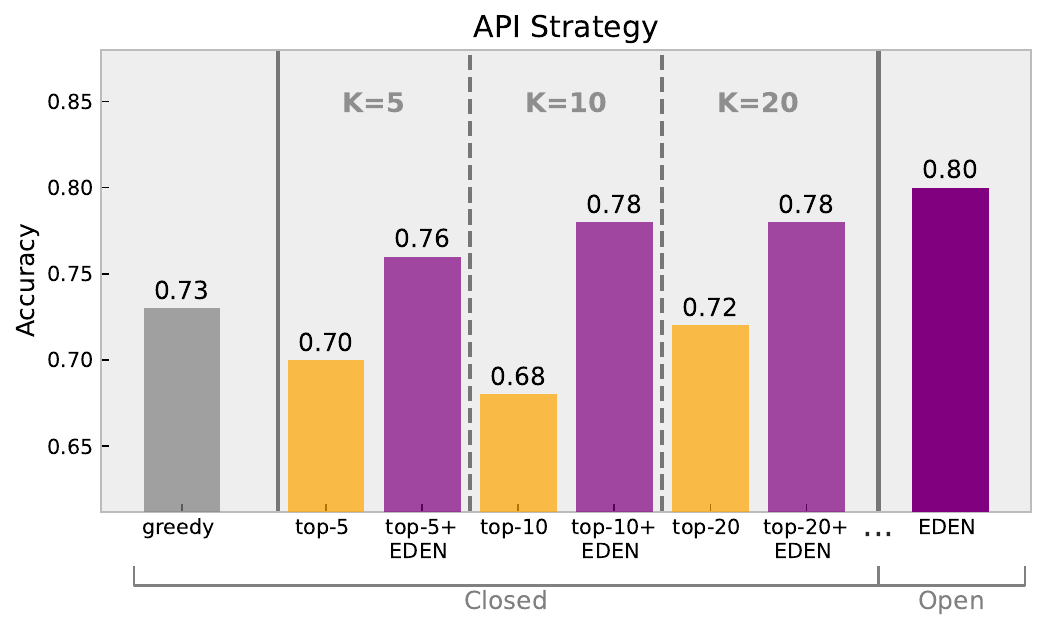}
\caption{\textbf{EDEN is compatible with closed APIs} under top-$k$ decoding. 
EDEN improves over both direct API decoding (greedy) and standard top-$k$ decoding. 
Performance under restricted top-$k$ access approaches the full-logit EDEN setting as $k$ increases. 
The rightmost bar shows EDEN with full logit access for reference.}
    \label{figClosedSource}
\end{figure}

\paragraph{Simulated closed-API setting} We apply EDEN on top of the OpenAI API in combination with top-$k$ decoding. Specifically, we compare three settings: greedy responses from the API, top-$k$ decoding over the responses, and EDEN applied to the top-$k$ log probs.  Due to high access costs and licensing constraints of closed source models, we simulate a black-box, closed source setting using Llama-3.2-3B-Instruct (discussed in limitations, \cref{appendixLimitations}). 
Although Llama itself is open source, here it's treated as closed and accessed only via the OpenAI API, exposing only partial output information, mimicking the constraints of commercial black-box APIs. 

By default, the OpenAI API (for GPT) provides the top $\leq20$ log probs, so we consider $k \leq 20$. 
\cref{figClosedSource} illustrates EDEN’s integration in this setting: compared to greedy and top-$k$ decoding, EDEN applied to top-$k$ logits yields improved accuracy, demonstrating both the effectiveness of the approach and compatibility with closed source models and alternate sampling strategies. Importantly, this holds across $k$, with the added benefit of consistently improving over both top-$k$ and greedy due to EDENs adaptive nature, while in this setting, top-$k$ on its own struggles to outperform greedy decoding. The rightmost bar in \cref{figClosedSource} shows the open access case, assuming full model logits are known and is used purely for comparison purposes, to show that as $k$ increases, we begin to approximate this open case.

\subsection{Summary of Empirical Findings}

Across four diverse benchmarks (covering mathematical reasoning, code generation, and scientific QA), EDEN consistently outperforms widely used decoding baselines (\cref{tblPerformance}). Compared to greedy decoding and standard or entropy-aware sampling (top-k, top-p, min-p, Top-$H$), 
EDEN yields between $+2$--$11\%$ absolute accuracy improvements, while matching or exceeding beam search 
accuracy with significantly fewer expansions. These results are relatively consistent across model families (\cref{appendixResults}). Efficiency experiments confirm that EDEN achieves a better expansion–quality frontier than beam search (\cref{figPareto}), 
approximating high-width search with substantially fewer expansions required, with the additional benefit of dynamically adjusting such expansions based on task difficulty (\cref{figDynamicComputation}). Sensitivity studies show that entropy estimates remain robust under limited sampling (\cref{figEntropyEstimation}), paraphrasing (\cref{figExampleRobustness}), and miscalibration (\cref{appendixMiscalibration}), with branching decisions stable within generous error bounds. Experimentally, we validated the central claim: adaptively allocating compute in proportion 
to entropy yields better accuracy--token trade-offs than existing decoding strategies, and allows task dependent information-based computation allocation.

% we demonstrate that entropy is stable under input perturbations: small total variation shifts in $P(\mathcal{V}|x)$ induce only minor changes in $H$, ensuring branching factors remain consistent, again due to the tolerable error rate.

\section{Conclusion}

We introduced \emph{Entropy-informed DEcodiNg} (EDEN), a principled search-based decoding framework for adaptive information-driven branching. Our theoretical analysis establishes new entropy-based guarantees: a monotone allocation theorem showing that entropy-aware branching improves over fixed-width strategies, together with explicit regret rates and sample-complexity guarantees for reliable selection under limited rollouts. These results provide a provably justified alternative to the ad hoc thresholding and heuristic rules that dominate existing decoding approaches. %, motivating the proposed approach. 
Empirically, EDEN consistently outperforms standard sampling and search baselines across mathematical reasoning, code generation, and scientific QA. The proposed approach matches or exceeds the accuracy of beam search while using significantly fewer expansions, and maintains robust branching decisions even under noisy entropy estimates. Together, these findings confirm that allocating compute adaptively in proportion to a model's output entropy yields superior accuracy--expansion trade-offs compared to static approaches. EDEN opens new directions for information-aware decoding in large-scale language models. Future work may further explore integrating entropy-informed branching with custom scoring  (such as diversity guided, \citet{vijayakumar2016diverse,vijayakumar2018diverse}) or reward functions (e.g. process reward models, \citet{zhang-etal-2025-lessons}), or moving beyond tokens to higher-level output abstractions such as tool calling \cite{yu2025building} or meanings \cite{farquhar2024detecting}.

{
\small
\section*{Disclaimer}
This paper was prepared for informational purposes by the Artificial Intelligence Research group of JPMorgan Chase \& Co. and its affiliates ("JP Morgan'') and is not a product of the Research Department of JP Morgan. JP Morgan makes no representation and warranty whatsoever and disclaims all liability, for the completeness, accuracy or reliability of the information contained herein. This document is not intended as investment research or investment advice, or a recommendation, offer or solicitation for the purchase or sale of any security, financial instrument, financial product or service, or to be used in any way for evaluating the merits of participating in any transaction, and shall not constitute a solicitation under any jurisdiction or to any person, if such solicitation under such jurisdiction or to such person would be unlawful.

© 2026 JPMorgan Chase \& Co. All rights reserved.
}

\clearpage
\section*{Impact Statement}

This paper presents work whose goal is to advance the field of Machine
Learning. There are many potential societal consequences of our work, none
which we feel must be specifically highlighted here.

\bibliography{bib}
\bibliographystyle{icml2026}

\newpage
\appendix
\onecolumn
\appendix

\section{Notation}\label{appendixNotation}

\begin{table*}[!htb]
\centering
\small
\begin{tabular}{ll}
\toprule
\textbf{Symbol} & \textbf{Meaning} \\
\midrule
$\mathcal{V}$ & Vocabulary set, with $|\mathcal{V}|$ tokens in total \\
$y$ & A token from the vocabulary $V$ \\
$\mathbf{y}_{1:t}$ & Partial output sequence of length $t$ \\
$x$ & Input/context provided to the model \\
$P(y_{t+1} \mid x, \mathbf{y}_{1:t})$ & Model distribution over the next token given context $x, \mathbf{y}_{1:t}$ \\
$\boldsymbol{p} = (p_1,\dots,p_V)$ & Sorted next-token probabilities, $p_1 \ge p_2 \ge \cdots \ge p_V$ \\
$i,j$ & Indices referring to candidate tokens in $\mathcal{V}$\\
T & Maximum generation length \\
\midrule
$H_t$ & Shannon entropy of the next-token distribution at step $t$: $H_t = -\sum_i p_i \log p_i$ \\
$\bar{H}_t$ & Normalised Shannon entropy ($\frac{H_t}{\log |\mathcal{V}|}$)\\
$\hat{H}_t$ & Estimated entropy from samples (used in closed source settings) \\
$\mathrm{PP}_t = \exp(H_t)$ & Perplexity at step $t$ \\
\midrule
$s(\mathbf{y}_{1:t})$ & Cumulative log-probability of a sequence: $\sum_{i=1}^t \log P(y_i \mid x,\mathbf{y}_{1:i-1})$ \\
$\mathrm{Score}_{\alpha}(\mathbf{y}_{1:t})$ & Length-normalized score $s(\mathbf{y}_{1:t}) / t^\alpha$, where $\alpha$ is a penalty exponent \\
$\alpha$ & Length penalty exponent (e.g.\ $\alpha = 1$ normalizes by sequence length) \\
$S^\ast$ & Best score among sequences explored so far (global lower bound for pruning) \\
\midrule
$V_t(i)$ & Value of choosing token $i$ at step $t$: $\log P(i \mid x_t) + \mathrm{OPT}_{t+1}(i)$ \\
$\mathrm{OPT}_{t+1}(i)$ & Optimal continuation value if token $i$ is chosen at step $t$ \\
$i^\ast$ & Index of the optimal next token: $i^\ast = \arg\max_i V_t(i)$ \\
$\Delta_t(i)$ & Gap between the best token and candidate $i$: $V_t(i^\ast) - V_t(i) \ge 0$ \\
$\Delta^{\text{eff}}_t$ & Effective gap between best token and alternatives \\
\midrule
$B_t$ & Adaptive branching factor at step $t$, derived from entropy $H_t$ \\
$B_{\max}$ & Maximum allowed branching factor (maximum beam width) \\
$f(H, B_{\max})$ & Scaling function converting entropy $H$ into a branching factor \\
$m_t$ & Effective compute/sampling budget allocated at step $t$ \\
$M$ & Total compute/sampling budget across all steps \\
\midrule
$\overline{S}(\mathbf{y}_{1:t})$ & Upper bound on normalized score of a partial sequence \\
$\underline{S}(\mathbf{y}_{1:t})$ & Lower bound on normalized score of a partial sequence \\
$S(\mathbf{y}_{1:t})$ & Normalized score of a complete sequence (equals upper and lower bounds at EOS) \\
\midrule
$\Lambda$ & Lipschitz constant controlling sensitivity of continuation values \\
$d(\cdot,\cdot)$ & Distance metric between distributions (e.g.\ total variation) \\
$\sigma^2$ & Variance proxy in sub-Gaussian noise model \\
$\delta_t$ & Tolerated probability of error at step $t$ \\
$\varepsilon$ & Mass outside the $\varepsilon$-typical set in Lemma~3.1 \\
$\epsilon$ & Additive error tolerance for entropy estimation \\
$c, C$ & Positive constants in concentration bounds \\
\bottomrule
\end{tabular}
\caption{Summary of notation used throughout the paper and appendices.}
\label{tabNotation}
\end{table*}

We provide a summary of key notation used throughout the paper in \cref{tabNotation}.

\section{Additional Results}\label{appendixResults}

\subsection{Additional Models}

\begin{table*}[!htb]
\centering
\caption{Additional accuracy results (and resulting ranks, lower the better) across varying model families.}\label{tblAdditionalResults}.
\begin{tabular}{@{}rlllll|c@{}}
\toprule
\multicolumn{1}{l}{}                            & \textbf{Method}  & \textbf{GSM8K} & \textbf{MATH500} & \textbf{HumanEval} & \textbf{SciBench} & Rank \\ \midrule
\cellcolor[HTML]{EFEFEF}                        & Greedy           &     32\%           &    19\%              &      18\%              &      3\%       &  4.25    \\
\cellcolor[HTML]{EFEFEF}                        & Top-$k$ Sampling &    32\%            &   20\%               &   \textbf{20\%}               &  3\%          &   2.63      \\
\cellcolor[HTML]{EFEFEF}                        & Top-$p$ Sampling &   32\%            & 19\%                 &        18\%            &   3\%          &   4.25   \\
\cellcolor[HTML]{EFEFEF}                        & Min-$p$ Sampling &  31\%              &   19\%               &      18\%              &   \textbf{4\%}       &   4.25      \\
\cellcolor[HTML]{EFEFEF}                        & Beam Search (3)  &   \textbf{35\%}             &    20\%              &          19\%             &      3\%       &  2.33     \\

\multirow{-6}{*}{\cellcolor[HTML]{EFEFEF}Gemma} & EDEN (Ours)      &           \textbf{35\%}     &  \textbf{21\%}                &   19\%                 &     3\%        & \textbf{2.00}     \\
\midrule
\cellcolor[HTML]{cccccc}                        & Greedy           &  64\%             &  23\%                 &      18\%              &    3\%      &  4.25       \\
\cellcolor[HTML]{cccccc}                        & Top-$k$ Sampling &   59\%             &   18\%               &     \textbf{20}\%               &    2\%    &   4.75        \\
\cellcolor[HTML]{cccccc}                        & Top-$p$ Sampling &     64\%           &   22\%               &    18\%                &      3\%     &    4.50     \\
\cellcolor[HTML]{cccccc}                        & Min-$p$ Sampling &    65\%            &  25\%                &     18\%               &     3\%    &    3.62       \\
\cellcolor[HTML]{cccccc}                        & Beam Search (3)  &   69\%             &  27\%                &   19\%                  &    3\%      &    2.50      \\
\multirow{-6}{*}{\cellcolor[HTML]{cccccc}Granite} & EDEN (Ours)      &    \textbf{73\%}            &  \textbf{31\%}                &       19\%             &        \textbf{4\%}     &    \textbf{1.37}   \\ 
\bottomrule
\end{tabular}
\end{table*}

To confirm the robustness of the results, we run on additional language models of varying sizes and from different families including Google Gemma (gemma-3-1b-it), and IBM Granite (granite-3.3-2b-instruct). This gives results on models of size $1B$, $2B$, and $3B$. The results are presented in \cref{tblAdditionalResults}, where EDEN again achieves the best rank (highest accuracy) across the decoding strategies on both Granite and Gemma, strengthening the claims in \cref{secResults}.

\subsubsection{Larger models}

\begin{table}[!htb]
\caption{Additional results on a 7B Mistral model (Mistral-7B-Instruct-v0.3)}\label{tblMistral}
\centering
\begin{tabular}{@{}lllllll@{}}
\toprule
      & EDEN & Beam & Top-k & Top-p & Min-p & Greedy \\ \midrule
\textbf{GSM8K} &  47\%  & 46\%  & 39\%  &  37\% &  42\%  &   37\%
\\ \bottomrule
\end{tabular}
\end{table}

As a proof-of-concept, we additionally run a subset of the data (100 examples from gsm8k) on Mistral (Mistral-7B-Instruct-v0.3), a 7B parameter model in \cref{tblMistral}. For computational reasons, we apply 8 bit quantization. Note that for this model, we applied a longer maximum token length of $T=1000$, as we found the outputs were generally longer containing more reasoning than the smaller models, so this gave sufficient space to generate final answers. Again, we can see the proposed approach performs strongly, achieving the highest accuracy.

\subsection{Alternative Scorers: Reward Models}\label{appendixRewardModel}

\begin{table}[!htb]
\centering
\caption{Augmenting with external reward model on a subset (50) of MATH-500 examples}\label{tblPRM}
\begin{tabular}{@{}llll@{}}
\toprule
                  & $\lambda=0$ & $\lambda=0.05$ & $\lambda=0.1$ \\ \midrule
\textbf{MATH-500} &   32\%         &   36\%     &  36\%   \\
\bottomrule
\end{tabular}
\end{table}

We investigate using a custom scorer (see \cref{secScorer}) instead of just the log likelihood. Specifically, we use the Qwen/Qwen2.5-Math-PRM-7B \citep{zhang-etal-2025-lessons} process reward model (PRM) to assign a bonus per step rewards $0 \leq \rho_t \leq 1$. We perturb the log-likelihood score $F_{\mathrm{LL}}$ to include a bonus term:
\[
F = F_{\mathrm{LL}} + \lambda R,
\qquad
R(y_{1:T})=\sum_{t=1}^T \rho_t(y_t,x,y_{1:t-1}).
\]
and vary the bonus strength parameter to show the impact of guiding the base model (Llama-3.2-3B-Instruct) through an external reward model $\rho$. Specifically, we look at $\lambda \in \{0, 0.05, 0.1\}$. The results are displayed in \cref{tblPRM}, where we can see an improvement in accuracy (+4\%) when including a bonus term, controlling the stepwise process towards the desirable outcome as governed by the process reward model.

\section{Additional Algorithm Details}

\subsection{Resources}\label{appendixBigO}

\begin{figure}[!htb]
    \centering
\includegraphics[width=.5\linewidth]{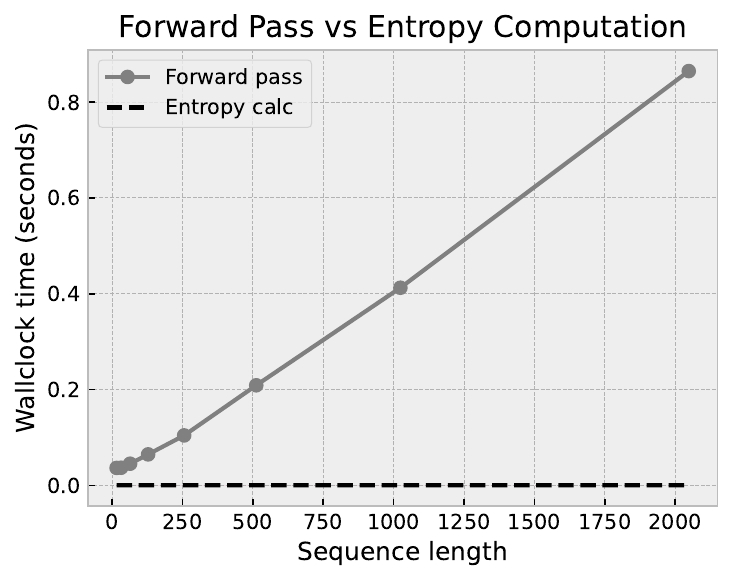}
    \caption{Overhead of computing entropy versus performing a forward pass}
    \label{figWallclock}
\end{figure}
We present an overview of the big-O costs and GPU suitability in \cref{tabBigO}. EDEN reduces \emph{average-case} expansions compared to beam search while retaining the same worst-case asymptotic complexity as standard beam search, with a small constant-factor $O(\mathcal{V})$ overhead from calculating entropy (compared to the quadratic forward pass in $T$), shown  in \cref{figWallclock}. EDEN's main savings come from reducing the average active beam size $B_t \leq B_\text{max}$, reducing the amount of forward passes when compared to beam search. %The main downside of the beam search approaches is the higher memory usage and more difficult parallelization than the other comparisons.

\subsection{Pruning}\label{secPruning}

For efficient and admissible pruning of low probability sequences, EDEN utilizes a branch-and-bound style search algorithm \citep{rush2013optimal}. To prune the search space efficiently, we compute admissible upper and lower bounds on the normalized score that a partial sequence can achieve (as presented in \cref{algPseudoCode}), and if an upper bound does not exceed the best lower bound seen, we eliminate the path to focus on the most promising regions.

\paragraph{Upper bound.} The best-case score assumes all future tokens for a partial sequence  \( \mathbf{y}_{1:t} \) are maximally probable (\( P = 1 \)). Then, the upper bound on the normalized score (at $t$) is:
\[
\overline{S}(\mathbf{y}_{1:t}) = \frac{s(\mathbf{y}_{1:t}) + (T - t) \cdot \log(1)}{T^{\alpha}} = \frac{s(\mathbf{y}_{1:t})}{T^{\alpha}}.
\]

\paragraph{Lower bound.} The worst-case score assumes the remaining tokens in the sequence have the lowest probability (for the top-token), which is based on the size of the vocabulary (\( P = \frac{1}{\mathcal{V}} \)):
\[
\underline{S}(\mathbf{y}_{1:t}) = \frac{s(\mathbf{y}_{1:t}) + (T - t) \cdot \log \left( \frac{1}{\mathcal{V}} \right)}{T^{\alpha}}.
\]

\paragraph{Completed sequences.} If a partial sequence ends with the end-of-sequence token (EOS) (or $t=T$), we treat it as complete. In this case, both the upper and lower bounds reduce to its actual normalized score:
\[
\overline{S}(\mathbf{y}_{1:t}) = \underline{S}(\mathbf{y}_{1:t}) = \frac{s(\mathbf{y}_{1:t})}{t^{\alpha}}.
\]

\paragraph{Running lower bound.} We maintain a global best lower bound across all sequences seen so far:
\[
S^* = \max_{\mathbf{y}_{1:t} \in \mathcal{S}} \underline{S}(\mathbf{y}_{1:t}),
\]
where \( \mathcal{S} \) is the set of all explored sequences. $S^*$ is initialized using the score of the greedy decoding sequence, giving a strong lower bound at the beginning of the search, allowing poor sequences to be eliminated early.

\paragraph{Pruning rule.} A candidate sequence \( \mathbf{y}_{1:t} \) is pruned if its upper bound falls below the best known lower bound:
\[
\overline{S}(\mathbf{y}_{1:t}) < S^*.
\]

This ensures that only candidates who could potentially outperform the current best are retained. The use of admissible bounds, where the upper bound never underestimates and the lower bound never overestimates, preserves optimality within the explored search space, ensuring we never eliminate potential best-scoring paths.

\section{Additional Robustness Discussion}

\subsection{Robustness to Prompt Paraphrasing}\label{appendixRobustness}

\begin{figure}[!htb]
\centering
\includegraphics[width=.5\linewidth]{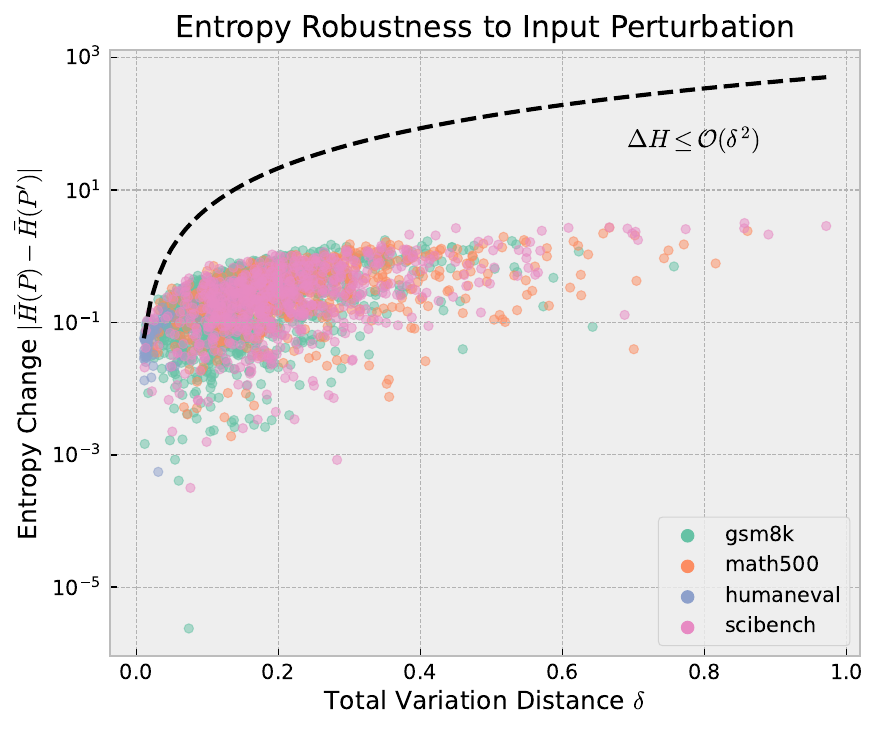}
    \caption{Stability of entropy under input perturbations (\cref{appendixRobustness}). Each point corresponds to a randomly perturbed (in embedding space) prompt. Small total variation shifts in $P(\mathcal{V}|x)$ induce only minor changes in $H$, ensuring robust branching factors.
    }
\label{figExampleRobustness}
\end{figure}

We test EDEN’s stability through perturbing the embedding space of the prompts, measuring the variance in the resulting entropy. The entropy $H(P(\cdot \mid x))$ is a stable function of the input $x$, assuming the LLM exhibits a smooth mapping from inputs to output distributions. In particular, if $x'$ is a perturbed version of $x$ such that the total variation distance between their output distributions satisfies $
\|P(\cdot \mid x) - P(\cdot \mid x')\|_1 \leq \delta,
$ then the change in entropy is bounded by
$
|H(P(\cdot \mid x)) - H(P(\cdot \mid x'))| \leq D_{\mathrm{KL}}(P(\cdot \mid x) \| P(\cdot \mid x')) + D_{\mathrm{KL}}(P(\cdot \mid x') \| P(\cdot \mid x)) \leq \mathcal{O}(\delta^2),
$
by Pinsker’s inequality and the continuity of entropy. This ensures that minor input variations (such as paraphrasing, reordering, or synonym substitutions) lead to only small differences in entropy, and therefore do not substantially affect branching decisions, making the entropy-based splitting criteria used in our method stable and reliable. The robustness to variations is demonstrated in \cref{figExampleRobustness}, showing the empirical relationship between total variation distance and entropy change for perturbed inputs to the LLM (Llama-3.2-3B-Instruct) across all datasets. As predicted by theory, entropy differences remain small when the total variation distance between output distributions is small, and grow subquadratically, consistent with the $\mathcal{O}(\delta^2)$ bound from Pinsker’s inequality. This illustrates the robustness of entropy under small input perturbations, supporting its use as a stable criterion for branching decisions.

% we demonstrate that entropy is stable under input perturbations: small total variation shifts in $P(\mathcal{V}|x)$ induce only minor changes in $H$, ensuring branching factors remain consistent, again due to the tolerable error rate.
    %Each point corresponds to a randomly perturbed (in embedding space) prompt. %

\subsection{Robustness to Miscalibration}\label{appendixMiscalibration}
It is well documented that LLM logits are often miscalibrated, i.e.\ their predicted probabilities do not align with true correctness likelihoods \citep{lovering2024language}. However, EDEN does not require perfectly calibrated probabilities: its purpose is not to recover real-world frequencies but to guide exploration according to the model's own internal distribution.

EDEN relies on the \emph{relative shape} of the predictive distribution $P(\cdot \mid x, \cdot)$, rather than on absolute calibration. A low-entropy (peaked) distribution signals that the model strongly prefers a small set of tokens, while a high-entropy (flat) distribution signals uncertainty. These structural cues remain valid regardless of whether probabilities are systematically under- or over-confident. Branching decisions depend on discrete thresholds of entropy (e.g.\ $B_t = \lfloor B_{\max} \cdot H_t \rfloor$). Small shifts in calibration that perturb entropy by less than $1/B_{\max}$, do not change the branching factor. Thus, allocation decisions are relatively stable even under moderate miscalibration.

\paragraph{Stability under temperature scaling.}
Temperature scaling, a common form of calibration adjustment \citep{pmlr-v70-guo17a}, applies a monotonic transformation to the logits, predictably altering entropy, since when applying temperature scaling, the distribution becomes:
\[
P_i^{(\mathcal{T})} = \frac{P_i^{1/\mathcal{T}}}{\sum_j P_j^{1/\mathcal{T}}},
\]
i.e., \(H\) strictly increases with temperature.  As \(\mathcal{T}\to 0\), \(H\to 0\) (deterministic), and as \(\mathcal{T}\to\infty\), \(H\to\log|\mathcal{V}|\) (uniform uncertainty).  The derivative $\tfrac{dH}{d\mathcal{T}}$ tells us exactly how entropy and thus the branching factor \(B_t\) scales with temperature in a controlled manner: increasing \(\mathcal{T}\) increases entropy (and branching), while lowering \(\mathcal{T}\) tightens focus in a quantified and stable way. Since our method tolerates entropy errors up to $\tfrac{0.5}{B{\max}}$ (and $f$ is monotone with $H$), these variations are typically well within safe bounds, ensuring that branching decisions remain stable under common forms of miscalibration or numerical noise.

\paragraph{Practical implications.}  
Our objective is not to align model probabilities with external truth, but to exploit the model’s internal confidence structure to adaptively allocate computation. Entropy is a reliable proxy for this confidence: it highlights points of high ambiguity where additional branching is useful, and remains stable under the typical forms of calibration shift encountered in practice. This robustness ensures that EDEN maintains relatively consistent behavior across models and decoding settings, without requiring any explicit calibration correction.

\section{Additional Theory}
\subsection{Sample complexity}\label{appendixSamples}

\textbf{Sub-Gaussian noise.} An estimator $\hat V$ of a random variable $V$ is $\sigma^2$-sub-Gaussian if
  \[
  \Pr(|\hat V-\E[V]|>\epsilon)\le 2\exp\Big(-\tfrac{m\epsilon^2}{2\sigma^2}\Big)
  \]
  when computed from $m$ independent samples. Constants $c,C>0$ in the main text depend only on this variance proxy.

\begin{proposition}[Sample complexity for correct selection]
Suppose value estimates are sub-Gaussian with variance proxy $\sigma^2/m$ when using $m$ samples. To guarantee probability $\le \delta$ of selecting a non-optimal element among $s$ candidates with effective gap $\Delta_{\mathrm{eff}}$, it suffices that
\[
m \ge \frac{C}{\Delta_{\mathrm{eff}}^2}\Big(\log s + \log \tfrac{1}{\delta}\Big),
\]
where $C>0$ depends only on the sub-Gaussian constant.
\end{proposition}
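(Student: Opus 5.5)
The plan is the standard union-bound argument for identifying the best of $s$ candidates under sub-Gaussian noise. A mistake happens exactly when the estimated value of the optimal candidate $i^*$ falls below the estimated value of some competitor. So the event $\{\text{mistake}\}$ is contained in $\bigcup_{i\neq i^*}\{\hat V(i)-\hat V(i^*)\ge 0\}$. For each competitor $i$, I will split the event $\hat V(i)-\hat V(i^*)\ge 0$ into two deviation events:
\[
\Big\{\hat V(i^*)-V(i^*)\le -\tfrac{\Delta(i)}{2}\Big\}\ \cup\ \Big\{\hat V(i)-V(i)\ge \tfrac{\Delta(i)}{2}\Big\}.
\]
If neither occurs, then $\hat V(i^*) > V(i^*)-\Delta(i)/2 = V(i)+\Delta(i)/2 > \hat V(i)$, so candidate $i$ does not win.

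Each deviation is bounded with the sub-Gaussian tail from the definition above, using variance proxy $\sigma^2/m$. This gives
\[
\Pr\big(|\hat V(i)-V(i)|\ge \Delta(i)/2\big)\le 2\exp\big(-m\Delta(i)^2/(8\sigma^2)\big).
\]
Next I replace each gap $\Delta(i)$ by the uniform lower bound $\Delta_{\mathrm{eff}}\le\Delta(i)$. In the Lipschitz setting of \cref{propBudgetEntropy}, this step uses the fact that the true gap $\Delta_t(i)$ dominates the effective gap: the Lipschitz assumption bounds the continuation-value difference by $\Lambda d$, so the log-probability gap minus $\Lambda d$ is at most the true gap in the relevant direction. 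With this substitution, a union bound over the at most $s-1$ competitors (and the shared event for $i^*$) gives
\[
\Pr(\text{mistake})\le 4s\,\exp\!\big(-m\Delta_{\mathrm{eff}}^2/(8\sigma^2)\big).
\]

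Finally I set the right-hand side to at most $\delta$ and solve for $m$:
\[
m\ge \frac{8\sigma^2}{\Delta_{\mathrm{eff}}^2}\big(\log s+\log\tfrac1\delta+\log 4\big).
\]
The $\log 4$ is absorbed into $C$ by enlarging the constant; this is valid when $s\ge 2$ or $\delta\le 1/2$, and the case $s=1$ is trivial. The resulting $C$ depends only on $\sigma^2$, as the statement requires.

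The main obstacle is a modeling mismatch. As written, $\Delta_{\mathrm{eff}}$ involves $\log p_1-\log P(i)$, i.e. ranking by probability, while $i^*$ is the argmax of $V$. I would handle this by stating explicitly that the proposition assumes $\Delta(i)\ge\Delta_{\mathrm{eff}}>0$ for all $i\neq i^*$, and by noting that the Lipschitz assumption supplies this inequality. If $\Delta_{\mathrm{eff}}\le 0$, the bound is vacuous, so there is nothing to prove.
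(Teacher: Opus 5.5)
Your proposal is correct and follows the paper's argument: pairwise sub-Gaussian concentration, a union bound over the competitors giving an error probability of order $s\,e^{-cm\Delta_{\mathrm{eff}}^2}$, and then solving for $m$. Your split of each pairwise event into two one-sided deviations at $\Delta(i)/2$ just makes the constants explicit ($8\sigma^2$, the factor $4s$, the absorbed $\log 4$) and works directly with the tail-form definition. You are also right to state the hypothesis $\Delta(i)\ge\Delta_{\mathrm{eff}}>0$, which the paper uses silently. Note, though, that for $\Delta_{\mathrm{eff}}<0$ the stated bound is not vacuous, since the gap enters squared, so positivity is a genuine hypothesis rather than a trivial case.
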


\begin{proof}
For each $i\ne i^*$, concentration gives $\Pr(\hat V(i)\ge \hat V(i^*)) \le e^{-c m \Delta_{\mathrm{eff}}^2}$. By a union bound, the error probability is at most $s e^{-c m \Delta_{\mathrm{eff}}^2}$. Requiring this to be $\le \delta$ yields the claimed condition with $C=1/c$.
\end{proof}

\subsection{Adaptive outperforms fixed for a fixed computation budget}\label{appendixAdaptive}

Assume that with $m_t$ samples at step $t$, the per-step error proxy satisfies:
\[
\Pr(\text{mistake at }t) \;\le\; A_t \, e^{-\kappa_t m_t},
\]
with $A_t \asymp \mathrm{PP}_t$ and $\kappa_t \asymp c(\Delta_t^{\mathrm{eff}})^2$.
From Proposition~\ref{propBudgetEntropy}, $\mathrm{PP}_t$ increases monotonically with entropy $H_t$ and $\Delta_t^{\mathrm{eff}}$ decreases monotonically with $H_t$, so $A_t$ is increasing and $\kappa_t$ is decreasing in $H_t$.  

With total budget $M$, the convex program
\begin{equation}
\label{eq:alloc_problem}
\min_{m_t \ge 0, \,\sum m_t = M} \ \sum_{t=1}^T A_t \, e^{-\kappa_t m_t}.
\tag{$\star$}
\end{equation}

has a unique minimizer $m^*$, as the objective is a sum of strictly convex functions of $m_t$. The allocation $m^*$ is non-decreasing in $H_t$, and any monotone function $m_t\propto \phi(H_t)$ strictly improves over equal allocation $m_t=M/T$ whenever the $H_t$ are not all equal.

\begin{proof}
Each term $A_t e^{-\kappa_t m_t}$ is convex in $m_t$. The Lagrangian stationarity gives $-A_t\kappa_t e^{-\kappa_t m_t}+\lambda=0$, so
\[
m_t^* = \frac{1}{\kappa_t}\log\Big(\frac{A_t\kappa_t}{\lambda}\Big).
\]
where $\lambda > 0$ is chosen so that $\sum_t m_t^\star = M$.  
As $A_t$ increases and $\kappa_t$ decreases with $H_t$, the RHS increases in $H_t$. Thus $m^*$ is monotone. Equal allocation is optimal only if all $A_t\kappa_t$ are constant across $t$; otherwise $m^*$ strictly outperforms it.
\end{proof}

\begin{theorem}[Entropy-adaptive dominates fixed allocation]
\label{thm:EA_dominates_fixed}
If $H_t$ are not all equal, then the entropy-adaptive allocation
\[
m_t^{\mathrm{EA}} \ \propto\  \kappa_t(H_t)^{-1} \log\!\big(A_t(H_t)\,\kappa_t(H_t)\big)
\]
(rescaled to $\sum m_t^{\mathrm{EA}}=M$) coincides with the unique minimizer $m^\star$ of \eqref{eq:alloc_problem} (by strict convexity) and hence
$$
\underbrace{\sum_{t=1}^T A_t e^{-\kappa_t m_t^{\mathrm{EA}}}}_{\text{Adaptive}}
\;<\;
\underbrace{\sum_{t=1}^T A_t e^{-\kappa_t (M/T)}}_{\text{Fixed}}.
$$
\end{theorem}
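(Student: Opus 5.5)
The plan is to read Theorem~\ref{thm:EA_dominates_fixed} as a statement about the convex program \eqref{eq:alloc_problem}. First I will show that the entropy-adaptive allocation is the KKT point of that program. Then I will show that the equal allocation $m_t = M/T$ is feasible but does not satisfy the KKT conditions. Strict convexity then gives the strict inequality.

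\textbf{Step 1 (existence and uniqueness of the minimizer).} Each term $g_t(m) = A_t e^{-\kappa_t m}$ has $g_t''(m) = A_t\kappa_t^2 e^{-\kappa_t m} > 0$, so the objective is strictly convex. The feasible set $\{m_t \ge 0,\ \sum_t m_t = M\}$ is a compact simplex. Hence a minimizer $m^\star$ exists and is unique, as already noted after \eqref{eq:alloc_problem}.

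\textbf{Step 2 (the EA allocation is the KKT point).} From the Lagrangian computation in the preceding proof, an interior stationary point has the form
\[
m_t^\star = \kappa_t^{-1}\log(A_t\kappa_t) - \kappa_t^{-1}\log\lambda,
\]
with $\lambda > 0$ fixed by $\sum_t m_t^\star = M$. I must be careful about the word ``rescaled'' here. Literal proportional rescaling of $\kappa_t^{-1}\log(A_t\kappa_t)$ agrees with this formula only when the $\kappa_t^{-1}\log\lambda$ term is itself proportional to it, for example when $\lambda = 1$ or all $\kappa_t$ are equal. I will therefore interpret the normalization to $\sum_t m_t^{\mathrm{EA}} = M$ as the choice of the multiplier $\lambda$, which is a water-filling normalization.

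I also need every $m_t^\star \ge 0$, so that the interior KKT point is the constrained minimizer. I will obtain this from the regime $M \gg 0$ used in the paper (cf.\ \cref{figMonteCarloRegret}). As $M$ grows, $\log\lambda \to -\infty$, so all $m_t^\star$ become positive. If some $m_t^\star$ hit the boundary $m_t = 0$, the complementary-slackness form $m_t^\star = \max\{0,\ \cdot\,\}$ still gives the unique minimizer, and the remaining steps are unchanged. With this interpretation, $m^{\mathrm{EA}} = m^\star$ follows from sufficiency of the KKT conditions for convex programs together with Step 1.

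\textbf{Step 3 (the equal allocation is strictly worse).} The equal allocation $M/T$ is feasible and interior, so it is optimal if and only if all partial derivatives coincide. That is, it is optimal if and only if
\[
A_t\kappa_t e^{-\kappa_t M/T}
\]
is constant in $t$. I would show this quantity is strictly monotone in $H_t$ using the monotonicity from Proposition~\ref{propBudgetEntropy}:
\begin{itemize}
\item $A_t \asymp \mathrm{PP}_t$ is increasing in $H_t$;
\item $\kappa_t$ is decreasing in $H_t$, so $e^{-\kappa_t M/T}$ is increasing in $H_t$.
\end{itemize}
The factor $\kappa_t$ alone moves the wrong way. However, for $M$ large the exponential factor dominates, so the product is strictly increasing in $H_t$, provided $\kappa_t$ is strictly decreasing. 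Under that strictness, non-constant $H_t$ means the equal allocation violates stationarity. So $M/T \ne m^\star$, and uniqueness gives the strict inequality in Theorem~\ref{thm:EA_dominates_fixed}.

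\textbf{Main obstacle.} The hard step is Step 3. The monotone dependence of $A_t$ and $\kappa_t$ on $H_t$ is only stated up to $\asymp$. Without strictness, or without a large-$M$ assumption, the product $A_t\kappa_t e^{-\kappa_t M/T}$ could be constant by coincidence even when the $H_t$ differ, and the strict inequality would then fail. I would first try to make the assumption explicit: $\kappa$ strictly decreasing in $H$, and $M$ large enough. Failing that, I would state the result as non-strict dominance, with strictness whenever stationarity fails at $M/T$.
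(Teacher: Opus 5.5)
Your proof is correct and follows the same route as the paper's: strict convexity gives a unique minimizer, $m^{\mathrm{EA}}$ is identified with the KKT point, and the equal allocation violates stationarity, so it is strictly worse. It is in fact more careful than the paper's argument, which tests stationarity at $M/T$ via non-constancy of $A_t\kappa_t$ (the correct test is non-constancy of $A_t\kappa_t e^{-\kappa_t M/T}$) and passes over the proportional-rescaling and nonnegativity issues; your explicit large-$M$ and strict-monotonicity hypotheses, together with reading the normalization as the choice of $\lambda$, are the right repairs.
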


\begin{proof}
Since \eqref{eq:alloc_problem} is strictly convex, its minimizer $m^\star$ is unique.  
If $H_t$ are not all equal (which is assumed to be the case in any well-trained LLM), then $A_t \kappa_t$ is not constant across $t$ (by Proposition~\ref{propBudgetEntropy}), so the fixed allocation $m_t = M/T$ does not satisfy the KKT optimality conditions.  
Therefore $m^\star \neq M/T$ and strict convexity implies
\[
\underbrace{
\sum_{t=1}^T A_t e^{-\kappa_t m_t^\star}
}_{\text{Optimal}}
\;<\;
\underbrace{
\sum_{t=1}^T A_t e^{-\kappa_t (M/T)}.
}_{\text{Fixed}}
\]
Since $m_t^{\mathrm{EA}}$ coincides with $m^\star$, the result follows.
\end{proof}

\begin{corollary}[Robust advantage of monotone policies]
\label{cor:EA_constant_factor}
For large enough $M$, any policy $m_t^\phi \propto \phi(H_t)$ with $\phi$ monotone and bounded distortion of $A_t,\kappa_t$ achieves a constant-factor improvement over the fixed policy whenever $\mathrm{Var}(H_t) > 0$.
\end{corollary}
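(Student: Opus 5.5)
The plan is to compare the fixed policy $m_t=M/T$ with a monotone policy $m_t^\phi = M\phi(H_t)/\sum_s\phi(H_s)$ through the exponents of the dominant terms in the objective of \eqref{eq:alloc_problem}. For large $M$, both the fixed cost $F(M)=\sum_t A_t e^{-\kappa_t M/T}$ and the adaptive cost $G(M)=\sum_t A_t e^{-\kappa_t m_t^\phi}$ are governed by their worst step. Specifically, $F(M)\asymp A_{t_0} e^{-(\kappa_{\min}/T)M}$, where $t_0$ is the step of largest entropy, hence smallest $\kappa_t$ by Proposition~\ref{propBudgetEntropy}. Likewise $G(M)\asymp e^{-M\min_t \kappa_t\phi(H_t)/\sum_s\phi(H_s)}$, up to prefactors bounded by $\max_t A_t$ and $T$. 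The improvement therefore reduces to comparing the two rates
\[
r_{\mathrm{fix}}=\frac{\kappa_{\min}}{T},\qquad r_\phi=\frac{\min_t \kappa_t\phi(H_t)}{\sum_s \phi(H_s)}.
\]
I interpret ``constant-factor improvement'' as $G(M)\le \rho\, F(M)$ for some $\rho<1$ independent of $M$ beyond a threshold. This holds as soon as $r_\phi\ge r_{\mathrm{fix}}$ and the prefactors compare favourably, and it holds a fortiori if $r_\phi>r_{\mathrm{fix}}$, in which case the ratio $G/F\to 0$.

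The next step is to make the ``bounded distortion'' hypothesis precise so that $r_\phi\ge r_{\mathrm{fix}}$ can be proved. The cleanest version is to assume constants $0<\ell\le u$ with $\ell\le \kappa_t\,\phi(H_t)/\bar\kappa\le u$, where $\bar\kappa$ is a normalising constant. In words, $\phi$ tracks $1/\kappa_t$ up to distortion $u/\ell$, which is what Theorem~\ref{thm:EA_dominates_fixed} suggests, since the optimal $m_t^\star$ scales like $\kappa_t^{-1}$ to leading order in $M$. Under this assumption $r_\phi\ge (\ell/u)\cdot T^{-1}\sum_s\kappa_s^{-1}\big/\ldots$, and a short computation gives
\[
\frac{r_\phi}{r_{\mathrm{fix}}}\;\ge\;\frac{\ell}{u}\cdot\frac{T}{\kappa_{\min}\sum_s \kappa_s^{-1}}.
\]
The second factor is strictly greater than $1$ exactly when the $\kappa_s$ are not all equal. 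By Proposition~\ref{propBudgetEntropy} and strict monotonicity, that is the case when $\mathrm{Var}(H_t)>0$. So the argument needs the distortion $u/\ell$ to be smaller than this gain. I would state the corollary with that quantitative condition; it is the honest content of ``bounded distortion.''

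Finally, the prefactors need handling. The $A_t$ (which behave like $\mathrm{PP}_t$) are bounded between $1$ and $|\mathcal{V}|$, so they contribute at most a multiplicative $T|\mathcal{V}|$. Once the rate gap $r_\phi-r_{\mathrm{fix}}>0$ holds, choose $M_0$ with $T|\mathcal{V}|e^{-(r_\phi-r_{\mathrm{fix}})M_0}\le \rho$. Then for $M\ge M_0$ we get $G(M)\le\rho F(M)$, and in fact the improvement is exponential in $M$, which is stronger than constant-factor. The main obstacle is the middle step. Without a quantitative link between $\phi$ and $1/\kappa_t$, a crude monotone $\phi$ such as $H/\log|\mathcal{V}|$ may not reach $r_\phi\ge r_{\mathrm{fix}}$: if $\phi$ is far too steep, low-entropy steps get starved. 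I would therefore first try to prove the rate inequality directly from the distortion bounds as above. If the distortion constant cannot be controlled by the variance of $H_t$ alone, I would weaken the claim to require that $\phi$ lie within a factor $u/\ell<T/(\kappa_{\min}\sum_s\kappa_s^{-1})$ of $1/\kappa_t$.
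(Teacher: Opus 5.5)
Your argument is correct for the statement you actually prove, and it takes a different route from the paper. The paper's proof is a short perturbation sketch. It says a bounded distortion of $A_t,\kappa_t$ perturbs the KKT solution only by bounded factors. It then says the allocation differs from $M/T$ on some steps when the $H_t$ are not all equal. From this it asserts a multiplicative reduction that survives $M\to\infty$. You instead work in the large-$M$ regime and compare the decay exponents $r_{\mathrm{fix}}=\kappa_{\min}/T$ and $r_\phi=\min_t\kappa_t\phi(H_t)/\sum_s\phi(H_s)$. You quantify ``bounded distortion'' relative to $1/\kappa_t$, and you absorb the prefactors $A_t$ into an explicit threshold $M_0$.

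What this buys is an argument that actually closes. In the paper's sketch, ``deviates from $M/T$'' does not imply a lower objective, since a deviation can overshoot or point the wrong way. A bounded distortion can also wipe out the gain. Take two steps with $\kappa_1=1<\kappa_2=1+\eta$, and a monotone $\phi$ giving $m_1=0.7M$, $m_2=0.3M$. Then $r_\phi=\min(0.7,\,0.3(1+\eta))<1/2=r_{\mathrm{fix}}$ for small $\eta$. This policy is exponentially worse than fixed, even though its distortion of $1/\kappa_t$ is bounded by roughly $7/3$.

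So your condition $u/\ell<T/(\kappa_{\min}\sum_s\kappa_s^{-1})$ is the real content of the corollary, not an artefact of your method. In words, the distortion must stay below the ratio of the harmonic mean of the $\kappa_t$ to their minimum. With $\ell=u$ it attains the optimal rate $1/\sum_s\kappa_s^{-1}$. The gain is then exponential in $M$ rather than a constant factor. This is consistent with your correct remark that $m^\star_t\propto\kappa_t^{-1}$ only to leading order in $M$.

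The price is threefold:
\begin{itemize}
\item You prove a more hedged statement than the one written.
\item You need strict monotonicity of $\kappa_t$ in $H_t$ to turn $\mathrm{Var}(H_t)>0$ into ``the $\kappa_t$ are not all equal''; the paper relies on this implicitly too.
\item Monotonicity of $\phi$ plays no role in your argument; only its proximity to $1/\kappa_t$ does.
\end{itemize}

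One small slip: the intermediate bound should read $r_\phi\ge \ell/(u\sum_s\kappa_s^{-1})$. This follows from $\min_t\kappa_t\phi(H_t)\ge\ell\bar\kappa$ and $\sum_s\phi(H_s)\le u\bar\kappa\sum_s\kappa_s^{-1}$. Your displayed ratio is right.
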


\begin{proof}
If $\phi$ distorts $A_t$ and $\kappa_t$ by at most fixed multiplicative constants, the KKT solution is perturbed only by bounded factors.  
Since $H_t$ are not all equal, the resulting allocation deviates from $M/T$ on a positive-measure subset of steps, reducing the objective by a multiplicative constant.  
This constant factor persists as $M\to\infty$ whenever $\mathrm{Var}(H_t) > 0$.
\end{proof}

{
\subsubsection{Regret bounds: Explicit Cumulative Regret Bound under a Uniform Gap Assumption}\label{appendixRegret}

In this subsection we make explicit a cumulative regret bound that follows from the error probability control used in Theorem~3.4 and Proposition~E.1.

\paragraph{Setup.}
At each step $t=1,\dots,T$, let $i^*_t$ denote the optimal action and define the instantaneous regret of choosing action $i$ as
\[
\Delta_t(i) \;=\; r_t(i^*_t) - r_t(i).
\]
The cumulative regret is
\[
R_T \;=\; \sum_{t=1}^T \Delta_t(I_t),
\]
where $I_t$ is the action selected at time $t$.

Assume:
\begin{enumerate}
    \item (\textbf{Sub-Gaussian estimation}) The score estimates used for selection are sub-Gaussian, so that Proposition~E.1 applies.
    \item (\textbf{Uniform gap}) There exists $\Delta_{\min} > 0$ such that
    \[
    \Delta_t^{\mathrm{eff}} \;\ge\; \Delta_{\min} \quad \text{for all } t.
    \]
    \item (\textbf{Bounded regret magnitude}) There exists $G>0$ such that
    \[
    0 \;\le\; \Delta_t(i) \;\le\; G \quad \text{for all } t,i.
    \]
    \item (\textbf{Bounded candidate set}) The effective branching factor satisfies
    \[
    PP_t \;\le\; P_{\max}.
    \]
\end{enumerate}

\paragraph{Step 1: Error probability bound.}
From the argument used in Theorem~3.4, the probability of selecting a suboptimal action at step $t$ satisfies
\[
\Pr(I_t \neq i^*_t)
\;\le\;
PP_t \exp\!\big(-c\, m_t (\Delta_t^{\mathrm{eff}})^2\big),
\]
for some constant $c>0$.

Using the assumptions,
\[
\Pr(I_t \neq i^*_t)
\;\le\;
P_{\max} \exp\!\big(-c\, m_t \Delta_{\min}^2\big).
\]

\paragraph{Step 2: Expected instantaneous regret.}
We bound the expected regret at step $t$:
\[
\mathbb{E}[\Delta_t(I_t)]
=
\sum_{i \neq i^*_t} \Delta_t(i)\,\Pr(I_t = i)
\;\le\;
G \,\Pr(I_t \neq i^*_t).
\]
Therefore,
\[
\mathbb{E}[\Delta_t(I_t)]
\;\le\;
G\, P_{\max} \exp\!\big(-c\, m_t \Delta_{\min}^2\big).
\]

\paragraph{Step 3: Cumulative regret bound.}
Summing over $t=1,\dots,T$ gives
\[
\mathbb{E}[R_T]
\;\le\;
G\, P_{\max}
\sum_{t=1}^T
\exp\!\big(-c\, m_t \Delta_{\min}^2\big).
\]

\paragraph{Corollaries.}

\begin{itemize}
    \item \textbf{Constant budget ($m_t \equiv m$):}
    \[
    \mathbb{E}[R_T]
    \;\le\;
    T\, G\, P_{\max} \exp(-c\, m \Delta_{\min}^2).
    \]
    This is linear in $T$, but with exponentially small prefactor.

    \item \textbf{Logarithmic budget ($m_t \asymp \log T$):}
    If $m_t \ge \frac{\alpha}{\Delta_{\min}^2} \log T$, then
    \[
    \mathbb{E}[R_T]
    \;\le\;
    G\, P_{\max}
    \sum_{t=1}^T T^{-c\alpha}
    \;=\;
    O\!\big(T^{1-c\alpha}\big).
    \]
    In particular, if $c\alpha > 1$, regret is sublinear.

    \item \textbf{Critical scaling ($m_t \asymp \tfrac{1}{\Delta_{\min}^2}\log T$):}
    Choosing
    \[
    m_t \ge \frac{1+\epsilon}{c\,\Delta_{\min}^2} \log T
    \]
    yields
    \[
    \mathbb{E}[R_T] = O(1),
    \]
    i.e., bounded regret.
\end{itemize}
}

\paragraph{Discussion.}
Theorem~3.4 and Appendix~E.2 establish that entropy-adaptive allocation improves over fixed policies in expectation. The result above shows that, under a standard uniform gap condition, this improvement can be strengthened into an explicit rate bound on cumulative regret as a function of $T$.

\subsection
{Justifying the distributional Lipschitz continuity assumption}\label{appendixLipshitz}
Our proposition relies on a regularity condition: small perturbations of the model's output distribution should not induce arbitrarily large jumps in
future (continuation) values. This is in the spirit of the \emph{simulation lemma} from reinforcement learning, and formalizes the smoothness intuition that similar predictive distributions yield similar downstream values.

\noindent\textbf{Why this is reasonable.}  In our setting, the per-step “reward’’ is the normalized log-likelihood, which is bounded (e.g., via temperature scaling). LLMs typically compute next-token probabilities through (relatively) smooth maps of the hidden state due to training over \textit{expectations} of an extremely large number of future continuations, so small perturbations of the context induce only small perturbations in $P(\cdot\mid x_t,\cdot)$,
making the Lipschitz continuity with respect to the predictive distribution a realistic regularity assumption.

Empirically, minor context edits or paraphrases produce only modest changes in
next-token probabilities (e.g., \cref{figExampleRobustness}), making the
distributional Lipschitz continuity assumption reasonable in practice.

\begin{figure}[!htb]
    \centering
\includegraphics[width=.5\linewidth]{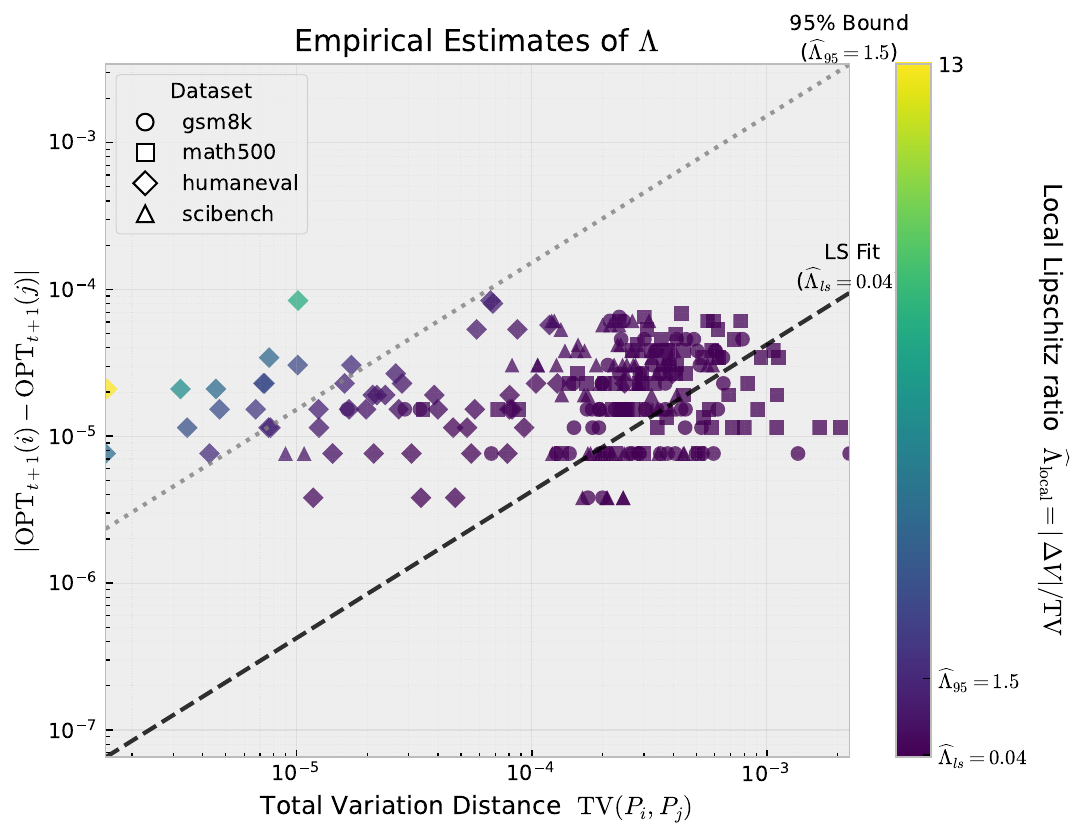}
    \caption{
Empirical validation of the distributional Lipschitz continuity assumption.
Each point corresponds to a small isotropic embedding perturbation applied to (100 equally sampled, 25 from each) prompts
from GSM8K, MATH500, HumanEval, and SciBench. The plot shows the relationship between the induced total-variation shift in the next-token distribution and
the absolute change in continuation log-likelihood. A least-squares fit through
the origin gives $\Lambda_{\mathrm{LS}} \approx 0.04$, while the 95th-percentile local
Lipschitz ratio is $\Lambda_{95} \approx 1.5$, indicating smooth dependence of continuation
values on next-token distributions, helping to empirically verify the assumption.
    }
    \label{fig:lipschitz}
\end{figure}

\paragraph{Empirical validation.}
We empirically assess this regularity assumption by measuring the sensitivity of
continuation values to small, norm-controlled perturbations of the model's
next-token distribution. For prompts drawn from GSM8K, MATH500, HumanEval, and
SciBench, we (i) compute input embeddings and a greedy continuation, (ii) apply
small isotropic perturbations to the embeddings, (iii) measure the resulting
total-variation (TV) shift in the next-token distribution, and (iv) record the
absolute change in the continuation log-likelihood. Each perturbation therefore
yields a pair
\[
    \bigl(\mathrm{TV}(P,\tilde P),\,|\Delta V|\bigr),
\]
capturing the local sensitivity of continuation values to next-token distribution shifts.

As shown in \cref{fig:lipschitz}, continuation values vary \emph{slowly} as a
function of TV distance. A least-squares fit through the origin gives a global
slope of $\Lambda_{\mathrm{LS}} \approx 0.04$, while the $95$th-percentile local
Lipschitz ratio is $\Lambda_{95} \approx 1.5$. These small slopes indicate that
continuation values are highly insensitive to modest distributional
perturbations.

This low-sensitivity behavior provides direct empirical support for the distributional Lipschitz
continuity assumption used in our analysis. Thus the distributional
Lipschitz continuity assumption is both mathematically convenient and empirically realistic for log-likelihood-based rewards. For more arbitrary or discontinuous reward functions, such an assumption may not hold. We discuss the required conditions in \cref{secScorer}.

\subsection{Alternative Scorers}\label{secScorer}

Maximizing the log-likelihood of a resulting sequence is not always desirable. While our theoretical results above are derived for the length–penalized log-likelihood
objective:
\[
F_{\mathrm{LL}}(y_{1:T})
=\sum_{t=1}^T \frac{\log P(y_t \mid x, y_{1:t-1})}{t^\alpha},
\]
we demonstrate here that the analysis extends to any sequence-level scorer that satisfies a set of structural conditions. This section specifies the required properties
and clarifies when the entropy–adaptive guarantees remain valid.

\paragraph{(1) Additive decomposability.}
All regret and branching results rely on the scorer admitting a per-step
decomposition
\begin{equation}
F(y_{1:T})=\sum_{t=1}^T r_t(y_t, x, y_{1:t-1}),
\tag{S.1}
\end{equation}
which induces value functions
\(
V_t(i)=r_t(i)+\mathrm{OPT}_{t+1}(i).
\)
This assumption ensures that continuation values are well-defined and that
standard concentration bounds apply.

\paragraph{(2) Bounded per-step rewards.}
We require
\begin{equation}
|r_t(\cdot)|\le B \qquad\forall t,
\tag{S.2}
\end{equation}
which guarantees sub-Gaussian rollout estimates and underpins the safe-pruning
criterion in \cref{algPseudoCode}.  
Unbounded or heavy-tailed rewards can violate the concentration guarantees.

\paragraph{(3) Distributional Lipschitz continuity.}
The key regularity assumption needed for Proposition~\ref{propBudgetEntropy} is
that future values change smoothly under small perturbations of the next-token
distribution:
\begin{equation}
\bigl|\mathrm{OPT}_{t+1}(i)-\mathrm{OPT}_{t+1}(j)\bigr|
\;\le\;
\Lambda\,
d\!\left(P(\cdot\mid x_t,i),\,P(\cdot\mid x_t,j)\right),
\tag{S.3}\label{eqS3}
\end{equation}
for some metric \(d\) (e.g.\ total variation). Scorers with discontinuous or adversarial dependence on the predictive distribution generally violate~\ref{eqS3}.

\paragraph{(4) Entropy–dependent effective gaps.}
We require the effective gap $\Delta_t^{\mathrm{eff}}$ for a step-level reward $r$ to decrease monotonically with entropy \(H_t\):
\[
\Delta_t^{\mathrm{eff}}
=
\min_{i\neq i^*}
\Big[r_t(i^*)-r_t(i)-\Lambda\,d\big(P(\cdot\mid x_t,i),P(\cdot\mid x_t,i^*)\big)\Big]
\]
which drives the optimality of entropy-proportional allocation.  
For a general scorer \(F\), we require only that this monotonicity holds up to a
constant shift. One way to ensure this is through adding a small, bounded,
Lipschitz regularizer to the log-likelihood:
\[
F = F_{\mathrm{LL}} + \lambda R,
\qquad
R(y_{1:T})=\sum_{t=1}^T \rho_t(y_t,x,y_{1:t-1}).
\]
to control the decoded output towards some desirable result (encoded in $\rho$). 
If the regularizer satisfies
\begin{align}
|\rho_t(i)-\rho_t(j)| &\le C_R,  \tag{S.4}\\
|\mathrm{OPT}^R_{t+1}(i)-\mathrm{OPT}^R_{t+1}(j)| 
&\le \Lambda_R \, d\big(P(\cdot\mid x_t,i),P(\cdot\mid x_t,j)\big), \tag{S.5}
\end{align}
then the combined scorer inherits the entropy–gap structure:
\[
\Delta_t^{\mathrm{eff}}(H_t)
\ge
\Delta_{t}^{\mathrm{eff,LL}}(H_t)
\;-\;
\lambda C',
\]
for a horizon-dependent constant \(C'\).  
As long as \(\lambda C'\) does not dominate the LL term, the map
\(H_t \mapsto \Delta_t^{\mathrm{eff}}(H_t)\) remains monotone, and all
entropy-adaptive allocation results continue to hold with modified constants. In practice, this allows one to control the decoded outputs towards some desirable outcome (beyond just likelihood).

\paragraph{Summary} The entropy-adaptive branching guarantees apply to any scorer \(F\) satisfying:
\begin{enumerate}
    \item additive, bounded per-step rewards (S.1–S.2);  
    \item distributional Lipschitz continuity (S.3);  
    \item monotone entropy–gap structure (automatically inherited for bounded
          log-likelihood perturbations, S.4);  
    \item sub-Gaussian rollout concentration (a consequence of bounded rewards).
\end{enumerate}
These conditions are close to minimal: violating any one breaks at least one component of the effective-gap or regret analysis. Within this class, the entropy-adaptive allocation continues to outperform any fixed-width branching policy, as established in \cref{secEntropy}.

\section{Limitations}\label{appendixLimitations}

\textbf{Computation} In the paper, we look at compute based on the number of model expansions (e.g. generation calls). However, this is just one view of compute. It is important to note that calculating the entropy at each step comes with its own cost. Additionally, having heterogeneous branching factors makes batching and GPU parallelism more difficult, serving as another potential limitation.

\textbf{Confidence} The tasks considered in this work generally stem around logical tasks, including math, programming and science, where generally higher confidence in the answers is preferred. However, for various other text generation tasks, such as creative writing, optimizing for sentence likelihood may not be as desirable. In this case, alternate scorers can be utilized following \cref{secScorer}.

\textbf{Calibration} As discussed in \cref{appendixMiscalibration}, the approach assumes there is useful information in the learnt output distribution. For undertrained models, or those with extremely poorly calibrated outputs, the proposed approach may not provide benefits due to the lack of information in the shape of the output distribution.

\textbf{Experiments} We have favored providing a breadth of experiments across datasets, model families, and scenarios, rather than repeated runs on the same experiments. While this does still help build faith in robustness of the results, ideally, every experiment (e.g., on the comparisons) would have been repeated many times with different samples. However, again, with limited compute budget we favored getting a wider set of results.

\textbf{Model usage} Due to computation limitations, we have restricted our main analysis to models $\leq 3$B parameters, with supplementary analysis up to 7B. Ideally, we would also analyse larger open-source models, but this proved computationally prohibitive even for base greedy decoding. Additionally, we only \emph{simulated} closed source models due to licensing constraints, however, ideally we would have used a real closed source model.

\end{document}